\newcommand{\alex}[1]{\textcolor[HTML]{D35400}{[\textbf{AK:} {\em #1}]}}
\newcommand{\woj}[1]{\textcolor[RGB]{76,0,153}{[\textbf{WM:} {\em #1}]}}
\newcommand{\oh}[1]{\textcolor[RGB]{153,0,76}{[\textbf{TH:} {\em #1}]}}
\newcommand{\liane}[1]{\textcolor{blue}{[\textbf{LM:} {\em #1}]}}
\newcommand{\petr}[1]{\textcolor[RGB]{15,120,30}{[\textbf{Petr:} {\em #1}]}}
\newcommand{\ignore}[1]{}
\renewcommand{\alex}[1]{}
\renewcommand{\woj}[1]{}
\renewcommand{\oh}[1]{}
\renewcommand{\liane}[1]{}
\renewcommand{\petr}[1]{}
\newcommand{\calD}{{\mathcal{D}}}
\newcommand{\calG}{{\mathcal{G}}}
\newcommand{\calH}{{\mathcal{H}}}
\newcommand{\calL}{{\mathcal{L}}}
\newcommand{\calS}{{\mathcal{S}}}
\newcommand{\calX}{{\mathcal{X}}}
\newcommand{\calY}{{\mathcal{Y}}}
\newcommand{\Real}{\mathbb R}
\newcommand{\be}{\begin{eqnarray}}
\newcommand{\ee}{\end{eqnarray}}
\newcommand{\bee}{\begin{eqnarray*}}
\newcommand{\eee}{\end{eqnarray*}}
\newcommand{\matrixb}{\left[ \begin{array}}
\newcommand{\matrixe}{\end{array} \right]}   
\newtheorem{theorem}{{Theorem}}
\newtheorem{lemma}{{Lemma}}
\newtheorem{definition}{{Definition}}
\newcommand{\argmax}{\operatornamewithlimits{\arg \max}}
\newcommand{\argmin}{\operatornamewithlimits{\arg \min}}
\newcommand{\E}{\mathbb E}
\DeclareRobustCommand\onedot{\futurelet\@let@token\@onedot}
\def\@onedot{\ifx\@let@token.\else.\null\fi\xspace}
\DeclareMathAlphabet\mathbfcal{OMS}{cmsy}{b}{n}
\def\eg{e.g\onedot} 
\def\ie{i.e\onedot} 
\def\etc{etc\onedot} 
\def\wrt{w.r.t\onedot}
\def\Vec#1{\mbox{ $\vec{ \mathbf{ #1}}$}}
\newcommand{\Tref}[1]{Table~\ref{#1}}
\newcommand{\Eref}[1]{Eq.~(\ref{#1})}
\newcommand{\Cref}[1]{Chap.~\ref{#1}}
\renewcommand{\paragraph}[1]{\noindent\textbf{#1}\,\,\,}
\icmltitlerunning{Neural Inverse Knitting: From Images to Manufacturing Instructions}
\begin{document}

\twocolumn[
\icmltitle{Neural Inverse Knitting: From Images to Manufacturing Instructions}



\icmlsetsymbol{equal}{*}

\begin{icmlauthorlist}
\icmlauthor{Alexandre Kaspar}{equal,mit}
\icmlauthor{Tae-Hyun Oh}{equal,mit}
\icmlauthor{Liane Makatura}{mit}
\\
\icmlauthor{Petr Kellnhofer}{mit}
\icmlauthor{Jacqueline Aslarus}{hs}
\icmlauthor{Wojciech Matusik}{mit}
\end{icmlauthorlist}

\icmlaffiliation{mit}{Computer Science \& Artificial Intelligence Laboratory (CSAIL), Massachusetts Institute of Technology (MIT), Cambridge, MA, USA}
\icmlaffiliation{hs}{Weston High School, Weston, MA, USA, work done during an internship at MIT}

\icmlcorrespondingauthor{Alexandre Kaspar}{akaspar@mit.edu}

\icmlkeywords{Machine Knitting, Program Synthesis, Machine Learning}

\vskip 0.3in
]



\renewcommand{\ICML@appearing}{
}

\printAffiliationsAndNotice{
\indent\phantom{\hspace{2.4mm}}{Project\,page: \url{http://deepknitting.csail.mit.edu}}\\
\icmlEqualContribution
} 

\begin{abstract}
Motivated by the recent potential of mass customization brought by whole-garment knitting machines, we introduce the new problem of automatic machine instruction generation using a single image of the desired physical product, which we apply to machine knitting.
We propose to tackle this problem by directly learning to synthesize regular machine instructions from real images.
We create a cured dataset of real samples with their instruction counterpart and propose to use synthetic images to augment it in a novel way.
We theoretically motivate our data mixing framework and show empirical results suggesting that making real images look more synthetic is beneficial in our problem setup.
\end{abstract}

\newcommand{\figTeaser}{
\begin{figure}[t]
\centering
\includegraphics[width=1\linewidth]{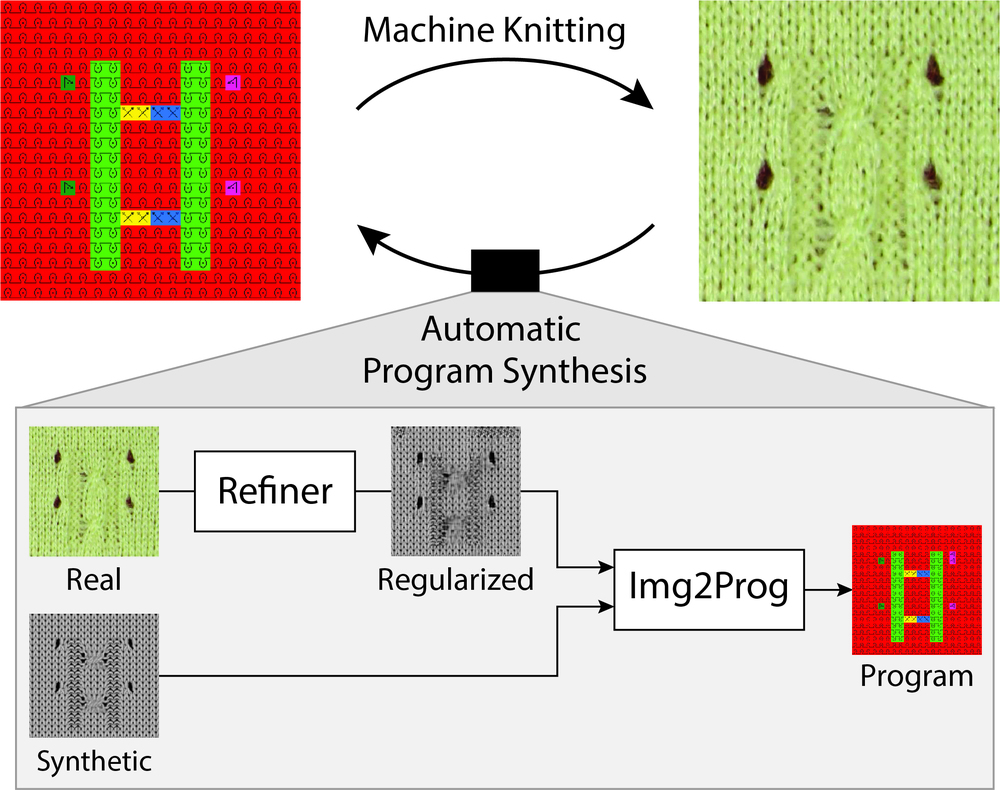}
\caption{
Illustration of our inverse problem and solution.
An instruction map (\emph{top-left}) is knitted into a physical artifact (\emph{top-right}).
We propose a machine learning pipeline to solve the inverse problem by leveraging synthetic renderings of the instruction maps.
}
\label{fig:teaser}
\end{figure}
}

\newcommand{\figMove}[1][t]{
\begin{figure*}[#1]
\centering
\includegraphics[width=\linewidth]{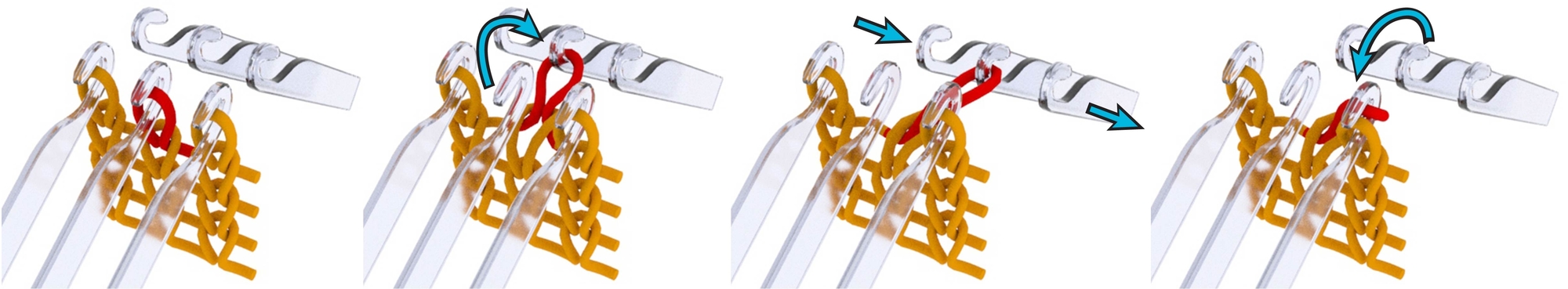}\vspace{-3mm}
\caption{Sample \emph{Transfer} sequence: move the red center stitch to the opposite bed; rack (move) the back bed 1 needle relative to the front; transfer the red stitch back to its original side. Note that the center front needle is now empty, while the right front needle holds 2 stitches.}
\label{fig:knit_miss_tuck}
\end{figure*}
}

\newcommand{\figKnitMissTuck}[1][t] {
\begin{figure}[#1]
\centering
\includegraphics[width=\linewidth]{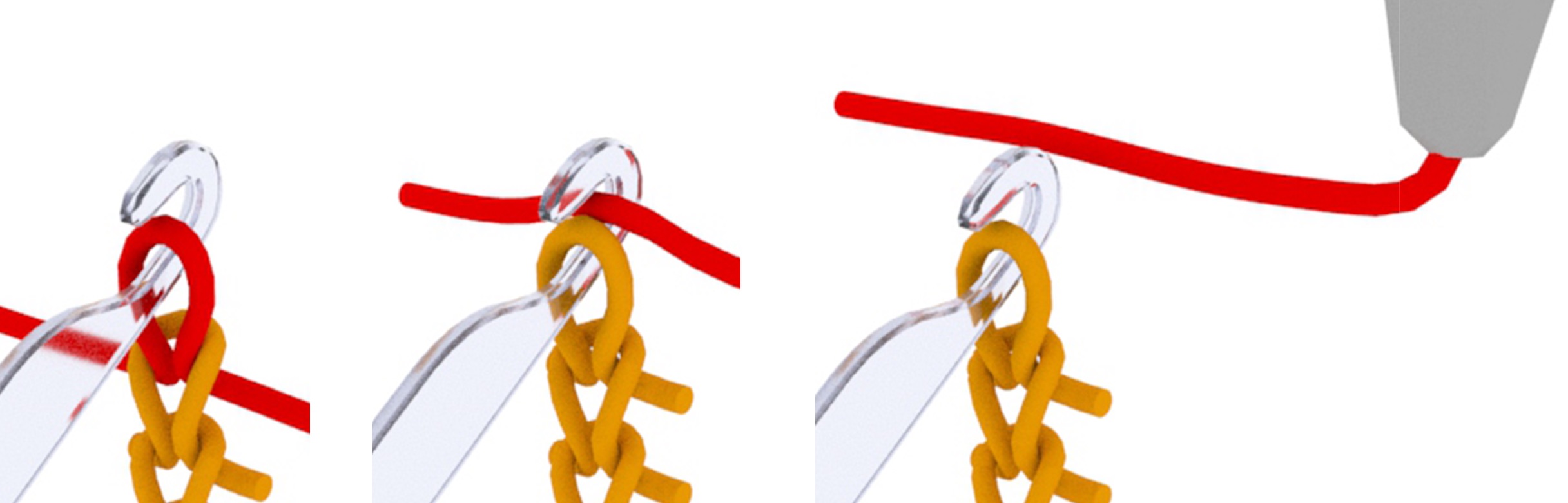}\vspace{-3mm}
\caption{(L to R) Illustration of \emph{Knit}, \emph{Tuck}, and \emph{Miss} operations.}
\label{fig:move}
\end{figure}
}

\newcolumntype{C}{>{\centering}X}%

\newcommand{\figInstructionSet}[1][t]{
\begin{figure*}[#1]
\centering
\includegraphics[width=\linewidth]{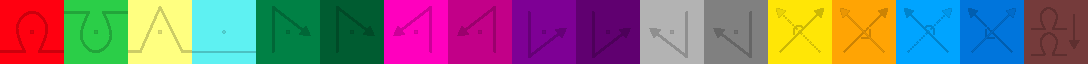}
\footnotesize
\begin{tabularx}{\linewidth}{CCCCCCCCCCCCCCCCC}
K & P & T & M & FR1 & FR2 & FL1 & FL2 & BR1 & BR2 & BL1 & BL2 & XR+ & XR- & XL+ & XL- & S
\end{tabularx}
\caption{
\textbf{Top}: abstract illustration and color coding of our $17$ instructions.
\textbf{Bottom}: instruction codes, which can be interpreted using the initial character of the following names:
\textbf{K}nit and \textbf{P}url (front and back knit stitches), 
\textbf{T}uck, \textbf{M}iss,
\textbf{F}ront, \textbf{B}ack,
\textbf{R}ight, \textbf{L}eft, 
\textbf{S}tack.
Finally, \textbf{X} stands for \emph{Cross} where $+$ and $-$ are the ordering (upper and lower).
\emph{Move} instructions are composed of their initial knitting side (\textbf{F}ront or \textbf{B}ack), the move direction (\textbf{L}eft or \textbf{R}ight) and the offset (1 or 2).
}
\label{fig:instruction_set}
\end{figure*}
}

\newcommand{\figDataset}[1][t]{
\begin{figure}[#1]
\centering
\includegraphics[width=\linewidth]{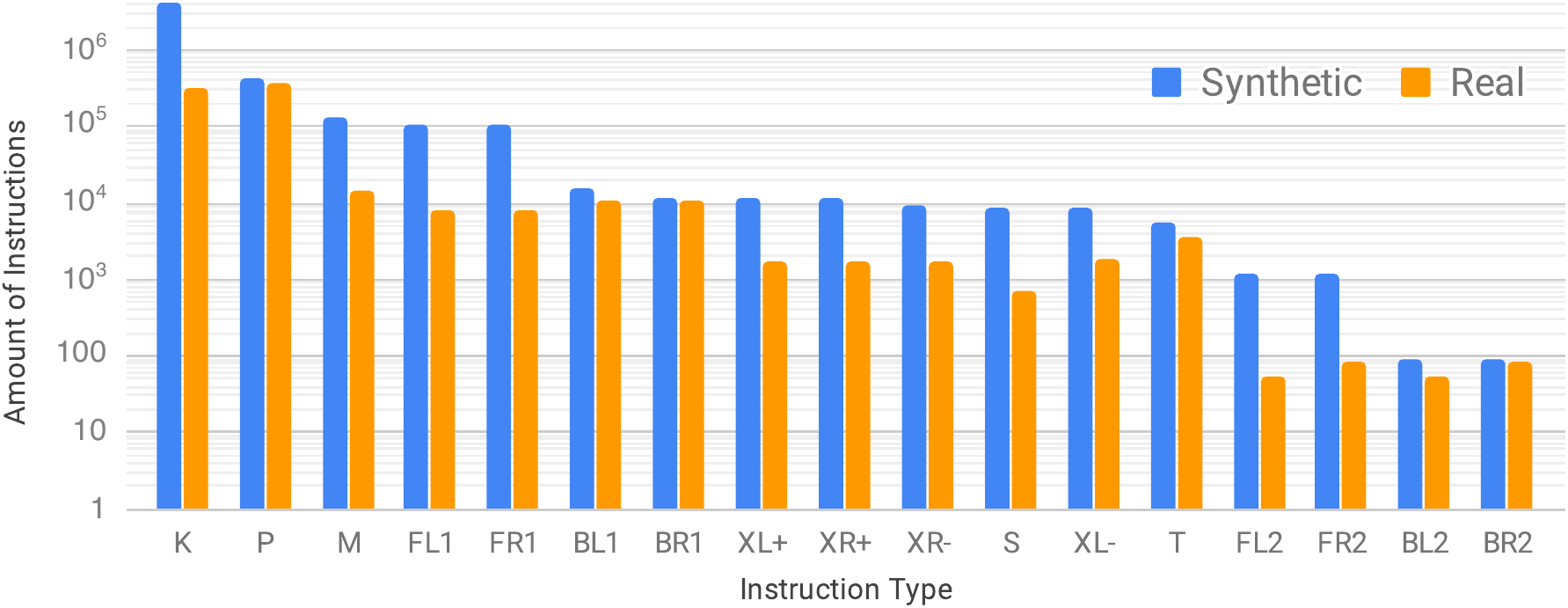}\vspace{-3mm}
\caption{
Instruction counts in descending order, for synthetic and real images.
Note the logarithmic scale of the Y axis.
}
\label{fig:dataset}
\end{figure}
}

\newcommand{\figSupervisedData}[1][t]{
\begin{figure}[#1]
\centering
\includegraphics[width=1\linewidth]{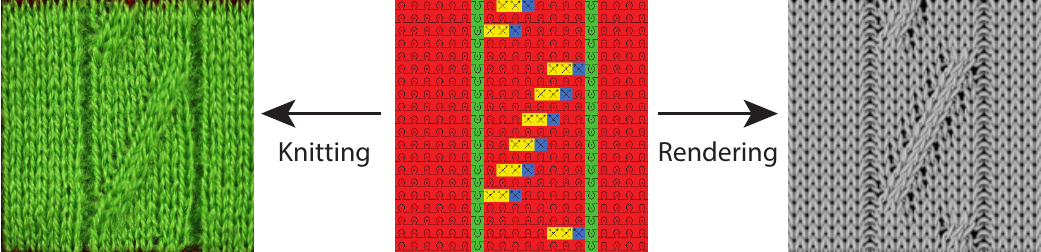}\vspace{-3mm}
\caption{
Different parts of our dataset (from left to right): real data images, machine instructions, and black-box rendering.
}
\label{fig:supervised_data}
\vspace{-3mm}
\end{figure}
}

\newcommand{\figAcquisition}[1][t]{
\begin{figure}[#1]
\centering
\includegraphics[height=3.3cm]{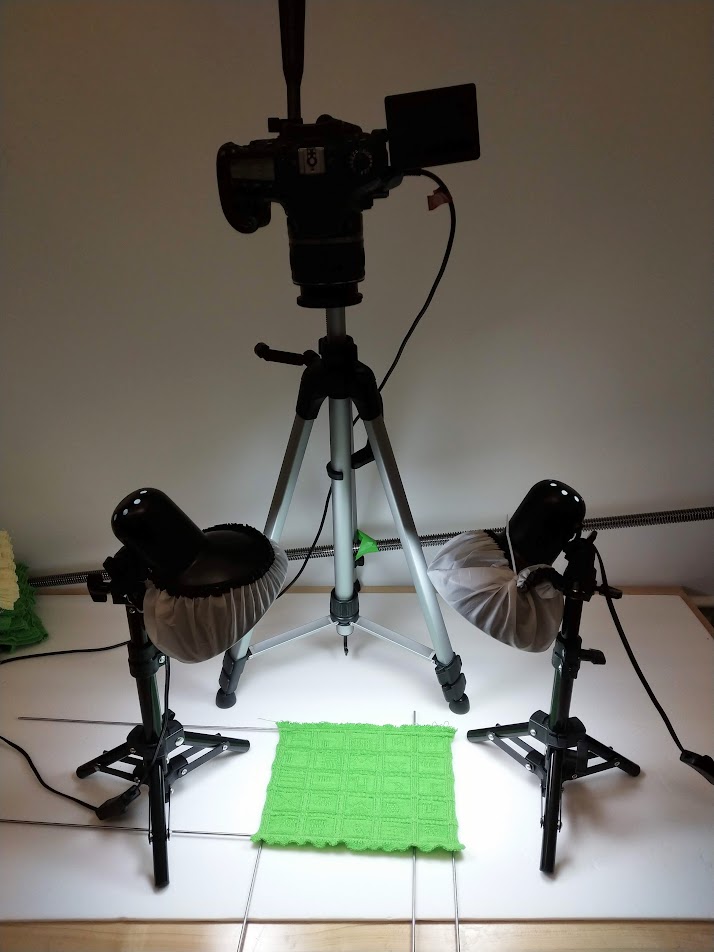}
\includegraphics[height=3.3cm]{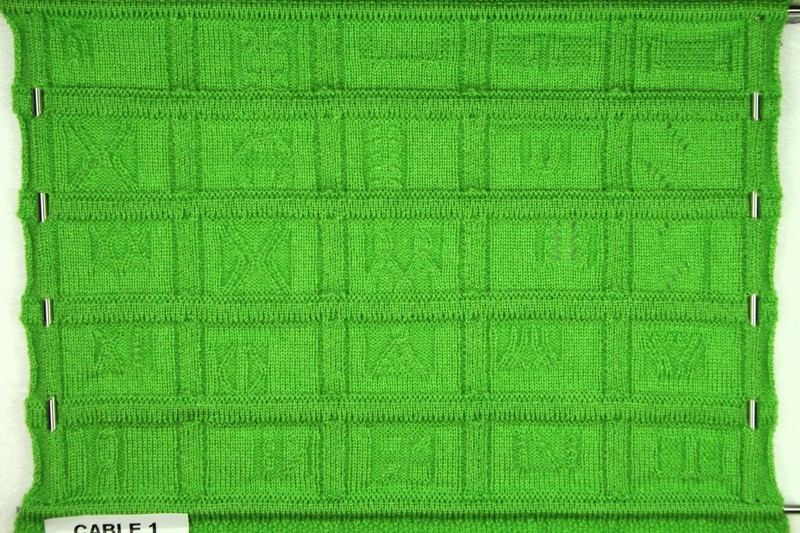}
\caption{
Our basic capture setup and a sample of $5\times 5$ knitted patterns with tension controlled by steel rods.
}
\label{fig:acquisition}
\end{figure}
}

\newcommand{\figBaselines}[1][ht!]{
\begin{figure*}[#1]
\includegraphics[width=\linewidth]{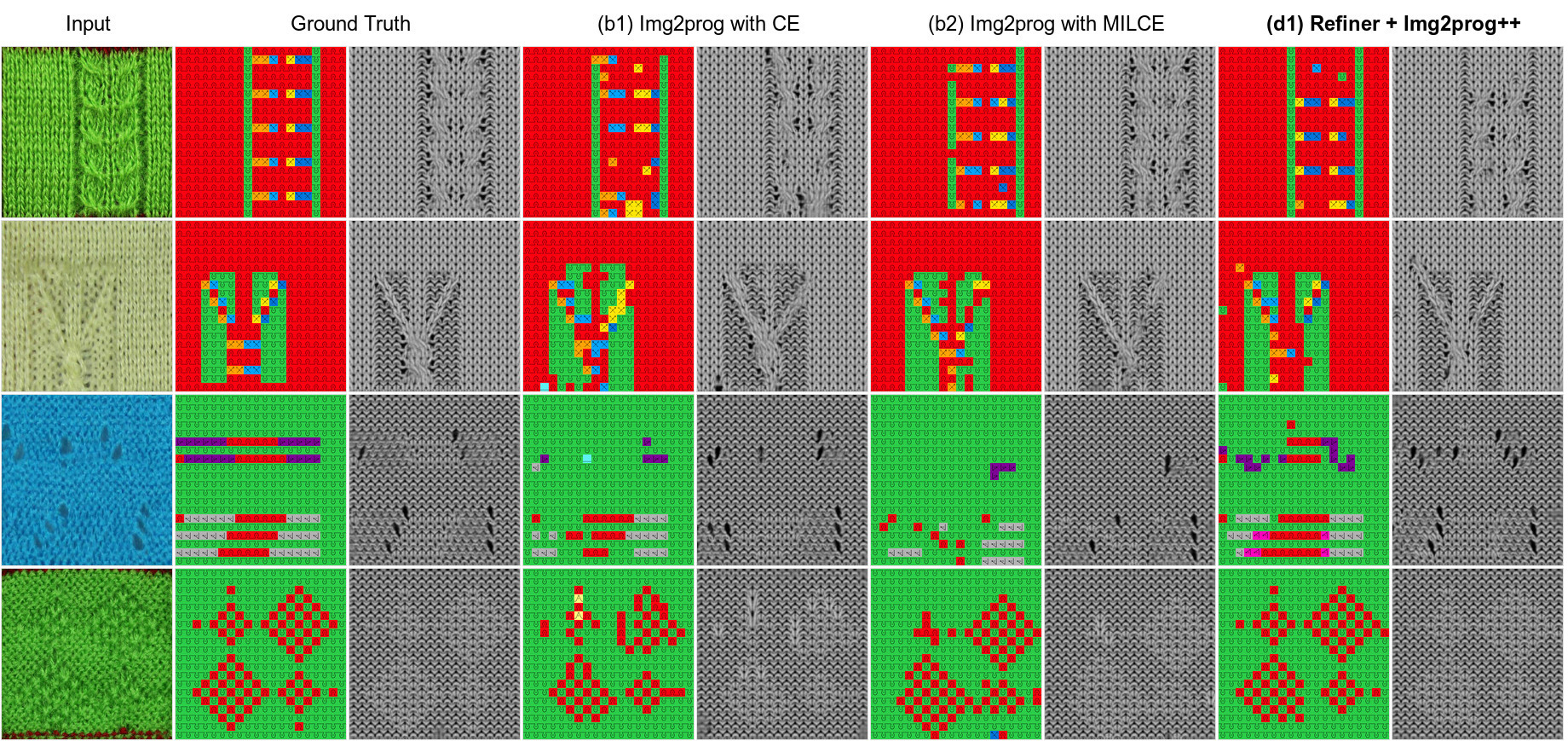}\vspace{-3mm}
\caption{
A comparison of instructions predicted by different versions of our method. We present the predicted instructions as well as a corresponding image from our renderer.
}
\label{fig:baselines}
\end{figure*}
}

\newcommand{\figBaselinesL}[1][ht!]{
\begin{figure*}[#1]
\includegraphics[width=\linewidth]{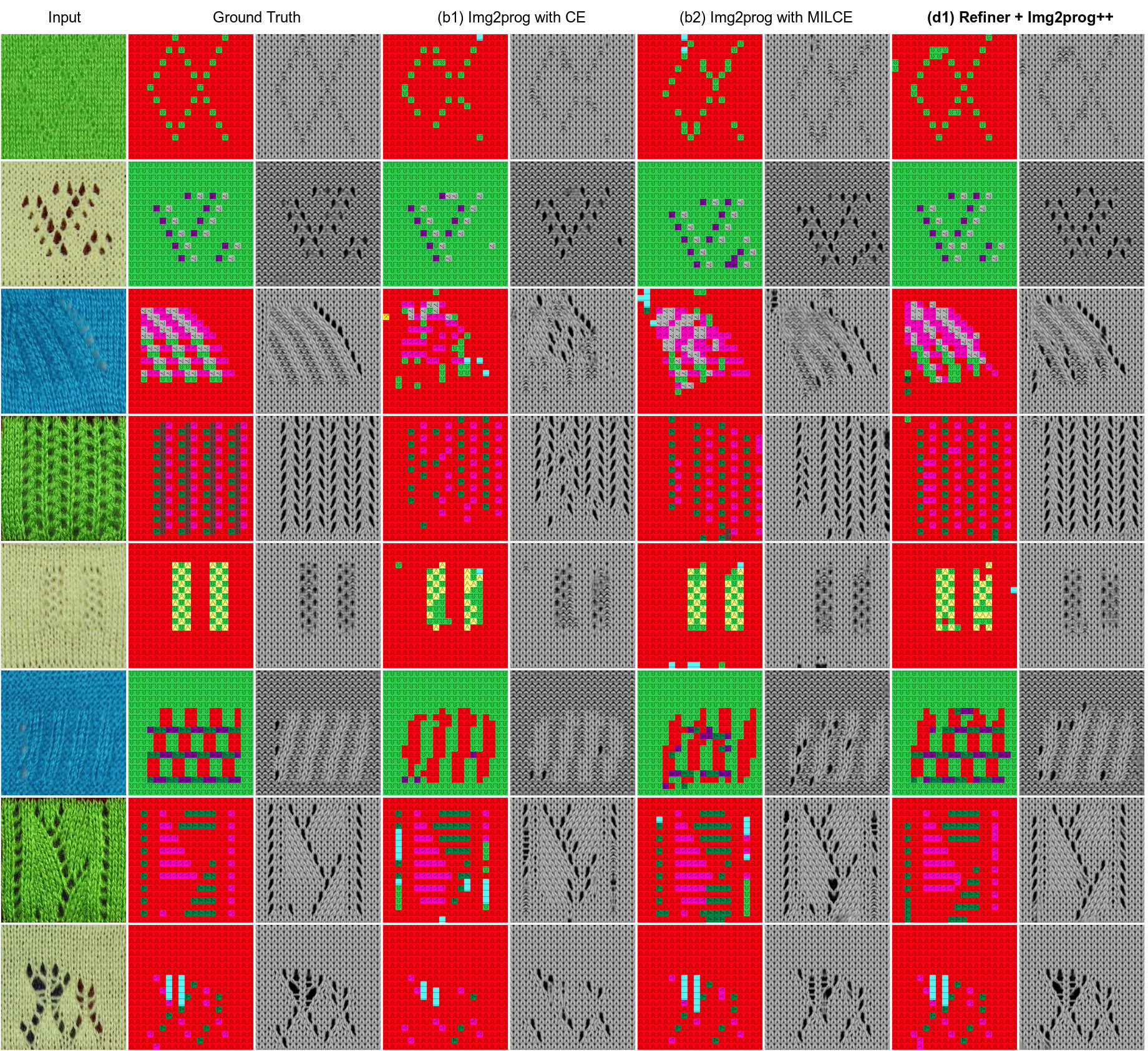}
\caption{
Additional comparisons of instructions predicted by different version of our method. We present the predicted instructions as well as a corresponding image from our renderer.
}
\label{fig:baselines_large}
\end{figure*}
}

\newcommand{\figDatasetAccuracy}[1][t]{
\begin{figure}[#1]
\includegraphics[width=\linewidth]{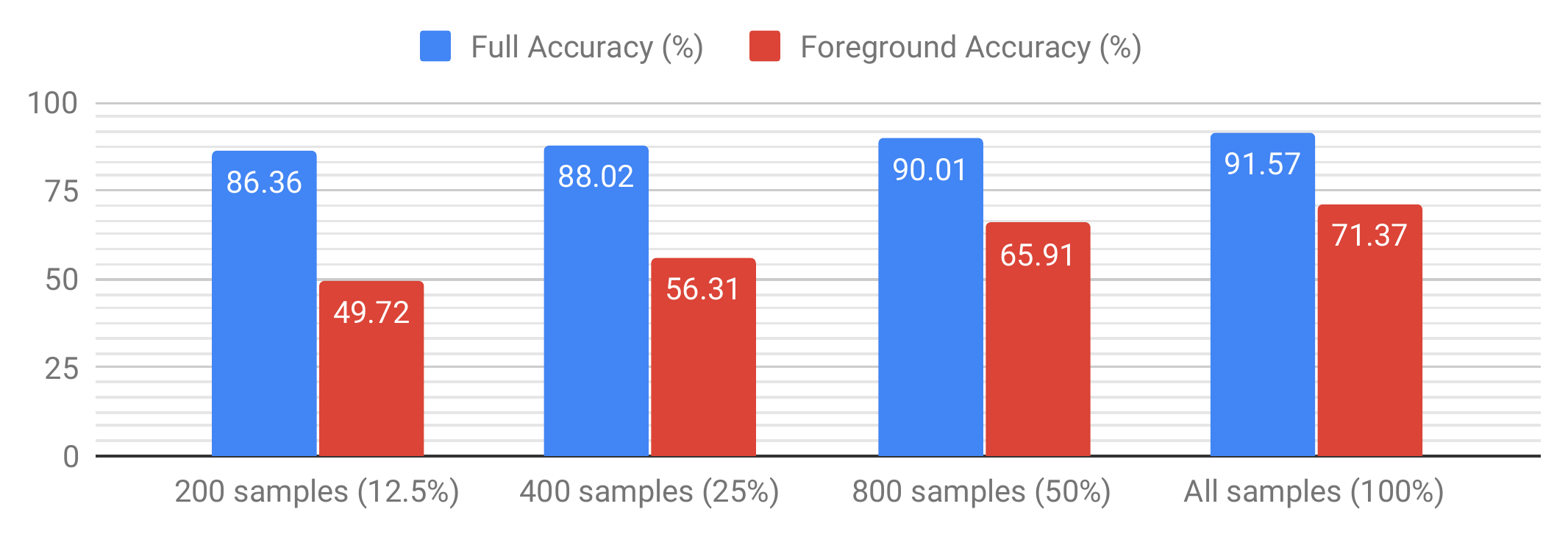}
\caption{
The impact of the amount of real training data (from 12.5\% to 100\% of the real dataset) over the accuracy.
}
\label{fig:dataset_accuracy}
\vspace{-3mm}
\end{figure}
}

\newcommand{\scale}[3]{
\begin{minipage}[t]{0.065\linewidth}
\centering
#2 #1 \\
\vspace{1mm}
\includegraphics[width=\linewidth]{figures/scale/#1.jpg} \\
\vspace{-3mm}
#3
\end{minipage}
}

\newcommand{\figScaleEstimationExp}[1][t]{
\definecolor{ScalePeak}{HTML}{EFEFEF}
\begin{figure*}[#1]
\centering
\resizebox{0.9\linewidth}{!}{
\scale{160}{}{}
\scale{200}{}{}
\scale{320}{}{}
\scale{400}{}{}
\scale{500}{}{}
\colorbox{ScalePeak}{
\scale{600}{\bf}{\noindent\rule{\textwidth}{1pt}}
}
\scale{700}{}{}
\scale{800}{}{}
\scale{900}{}{}
\scale{1000}{}{}
\scale{1500}{}{}
\scale{2000}{}{}
}\\
\includegraphics[width=0.9\linewidth]{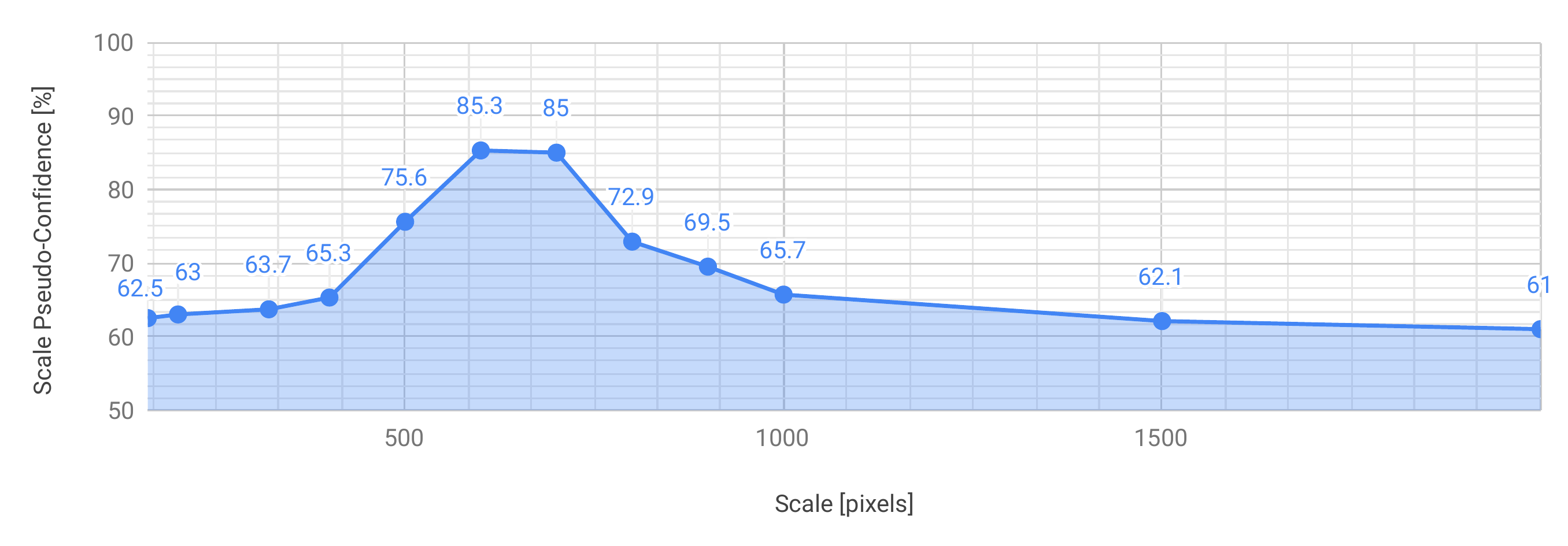}
\vspace{-3mm}
\caption{Scale identification experiment.
\textbf{Top row}: cropped input image at corresponding scales with the correct pixel scale in bold with a light-gray background.
\textbf{Plot}: pseudo-confidence curve showing a peak at the correct pixel scale ($600$).
}
\label{fig:scale}
\end{figure*}
}

\newcommand{\refSec}[1]{Section~\ref{sec:#1}}
\newcommand{\refSubSec}[1]{Section~\ref{subsec:#1}}
\newcommand{\refFig}[1]{Figure~\ref{fig:#1}}
\newcommand{\refEq}[1]{Equation~\ref{eq:#1}}
\newcommand{\refTbl}[1]{Table~\ref{tbl:#1}}
\newcommand{\tabComparisons}[1][t]{
\begin{table}[#1]
\vspace{-2mm}
\centering
\caption{\textbf{Performance comparison to baseline methods on our real image test dataset.} The table shows translation invariant accuracy of the predicted instructions with and without the background and PSNR and SSIM metrics for the image reconstruction where available. More is better for all metrics used.}
\vspace{5pt}
\resizebox{1.00\linewidth}{!}{
\begin{tabular}{@{\hskip0.5mm}c@{\hskip0.5mm}|l|c|c|c|c}
& \multirow{2}{*}{\textbf{Method}} & \multicolumn{2}{c|}{\textbf{Accuracy (\%)}} &
\multicolumn{2}{c}{\textbf{Perceptual}}  \\ 
\cline{3-4}
\cline{5-6}
& & \textbf{Full} & \textbf{FG} & \textbf{SSIM} & \textbf{PSNR [dB]} \\
\hline
(a1) & CycleGAN \cite{Zhu17}       & 57.27 & 24.10 & 0.670 & 15.87 \\
(a2) & Pix2Pix \cite{Isola16}      & 56.20 & 47.98 & 0.660 & 15.95 \\
(a3) & UNet \cite{Ronneberger15}   & 89.65 & 63.99 & 0.847 & 21.21 \\
(a4) & Scene Parsing \cite{Zhou16} & \bf 91.58 & \bf 73.95 & \bf 0.876 & \bf 22.64 \\
(a5) & S+U \cite{shrivastava2017learning} & 91.32 & 71.00 & 0.864 & 21.42 \\
\hline
(b1) & \texttt{Img2prog} (real only) with CE    & 91.57 & 71.37 & 0.866 & \bf 21.62 \\
(b2) & \texttt{Img2prog} (real only) with MILCE & \bf 91.74 & \bf 72.30 & \bf 0.871 & 21.58 \\
\hline
(c1) & Refiner + \texttt{Img2prog} ($\alpha=0.9$)   & 93.48 & 78.53 & 0.894 & \bf 23.28 \\
(c2) & Refiner + \texttt{Img2prog} ($\alpha=2/3$)   & \bf 93.58 & \bf 78.57 & 0.892 & 23.27 \\
(c3) & Refiner + \texttt{Img2prog} ($\alpha=0.5$)   & 93.57 & 78.30 & \bf 0.895 & 23.24 \\
(c4) & Refiner + \texttt{Img2prog} ($\alpha=1/3$)   & 93.19 & 77.80 & 0.888 & 22.72 \\
(c5) & Refiner + \texttt{Img2prog} ($\alpha=0.1$)   & 92.42 & 74.15 & 0.881 & 22.27 \\
\hline
(d1) & Refiner + \texttt{Img2prog}++ ($\alpha=0.5$) & \bf 94.01 & \bf 80.30 & \bf 0.899 & \bf 23.56 \\
\end{tabular}
}
\label{tab:comparisons}
\vspace{-3mm}
\end{table}
}

\newcommand{\tableLargeNetwork}[1][t]{
\begin{table*}[#1]
\centering
\caption{\textbf{Performance comparison with larger scene parsing network from~\cite{Zhou16}}. (d2) uses pre-training on ImageNet~\cite{russakovsky2015imagenet} and a much larger number of parameters (1.4M v.s. 51.4M).}
\label{tab:large}
\vspace{5pt}
\resizebox{0.75\linewidth}{!}{
\begin{tabular}{@{\hskip0.5mm}c@{\hskip0.5mm}|l|c|c|c|c|c}
& \multirow{2}{*}{\textbf{Method}} & \multicolumn{2}{c|}{\textbf{Accuracy (\%)}} &
\multicolumn{2}{c|}{\textbf{Perceptual}} &
\textbf{\# Parameters} \\ 
\cline{3-4}
\cline{5-6}
& & \textbf{Full} & \textbf{FG} & \textbf{SSIM} & \textbf{PSNR [dB]} & \textbf{(in Millions)}\\
\hline
(d1) & Refiner + img2prog++ ($\alpha=1/2$) & 94.01 & 80.30 & 0.899 & 23.56 & 1.4 \\
(d2) & Large Scene Parsing w/ pre-training & \bf 94.95 & \bf 83.46 & \bf 0.908 & \bf 24.58 & 51.4 \\
\end{tabular}
}
\end{table*}
}

\newcommand{\tabPerInstruction}[1][t]{
\begin{table*}[#1]
\vspace{-2mm}
\centering
\caption{
\textbf{Performance of \emph{Refined+Img2prog++} measured per instruction over the test set.
}
This shows that even though our instruction distribution has very large variations, our network is still capable of learning some representation for the least frequent instructions (3 orders of magnitude difference for FR2, FL2, BR2, BL2 compared to K and P).
\vspace{5pt}
}
\resizebox{1.00\linewidth}{!}{
\begin{tabular}{@{\hskip0.5mm}c|cccccccccccccccccc}
Instruction & K & P & T & M & FR1 & FR2 & FL1 & FL2 & BR1 & BR2 & BL1 & BL2 & XR+ & XR- & XL+ & XL- & S \\
\hline
Accuracy [\%] & 96.52 & 96.64 & 74.63 & 66.65 & 77.16 & 100.00 & 74.20 & 83.33 & 68.73 & 27.27 & 69.94 & 22.73 & 60.15 & 62.33 & 60.81 & 62.11 & 25.85 \\
Frequency [\%] & 44.39 & 47.72 & 0.41 & 1.49 & 1.16 & 0.01 & 1.23 & 0.01 &  1.22 & 0.02 & 1.40 & 0.02 & 0.22 & 0.18 & 0.19 & 0.22 & 0.12 \\
\end{tabular}
}
\label{tab:per_instruction}
\vspace{-3mm}
\end{table*}
}

\section{Introduction}
Advanced manufacturing methods that allow completely automated production of customized objects and parts are transforming today's economy. One prime example of these methods is whole-garment knitting that is used to mass-produce many common textile products (e.g., socks, gloves, sportswear, shoes, car seats, etc.). During its operation, a whole garment knitting machine executes a custom low-level program to manufacture each textile object. Typically, generating the code corresponding to each design is a difficult and tedious process requiring expert knowledge. A few recent works have tackled the digital design workflow for whole-garment knitting~\cite{Underwood09, McCann16, Narayanan18,Yuksel2012,Wu18a,Wu18b}. None of these works, however, provide an easy way to specify patterns.   

\figTeaser
\figMove
\figKnitMissTuck

The importance of patterning in textile design is evident in pattern books \cite{Donohue15, Shida17}, which contain instructions for hundreds of decorative designs that have been manually crafted and tested over time.
Unfortunately, these pattern libraries are geared towards hand-knitting and they are often incompatible with the operations of industrial knitting machines.
Even in cases when a direct translation is possible, the patterns are only specified in stitch-level operation sequences. Hence, they would have to be manually specified and tested for each machine type similarly to low-level assembly programming.

In this work, we propose an inverse design method using deep learning to automate the pattern design for industrial knitting machines. In our \emph{inverse knitting}, machine instructions are directly inferred from an image of the fabric pattern.
To this end, we collect a paired dataset of knitting instruction maps and corresponding images of knitted patterns.
We augment this dataset with synthetically generated pairs obtained using a knitting simulator~\cite{Shima11}.
This combined dataset facilitates a learning-based approach.
More specifically, we propose a theoretically inspired image-to-program map synthesis method that leverages both real and simulated data for learning.
Our contributions include:
{\setlength{\leftmargini}{6mm}
\begin{itemize}
\setlength\itemsep{0mm}
    \item An automatic translation of images to sequential instructions for a real manufacturing process;
    \item A diverse knitting pattern dataset that provides a mapping between images and instruction programs specified using a new domain-specific language (DSL)~\cite{kant2018recent} that significantly simplifies low-level instructions and can be decoded without ambiguity;
    \item A theoretically inspired deep learning pipeline to tackle this inverse design problem; and
    \item A novel usage of synthetic data to learn to neutralize real-world, visual perturbations.
\end{itemize}
}
In the rest of the paper, we first provide the necessary background in machine knitting and explain our 2D regular instructions, we then go over our dataset acquisition, detail our learning pipeline making use of synthetic data, and finally go over our experiment results.

\section{Knitting Background}
\label{sec:background}
Knitting is one of the most common forms of textile manufacturing.
The type of knitting machine we are considering in this work is known as a \emph{V-bed} machine, which allows automatic knitting of whole garments.
This machine type uses two beds of individually controllable needles, both of which are oriented in an inverted \emph{V} shape allowing opposite needles to transfer loops between beds.
The basic operations are illustrated in Figures~\ref{fig:knit_miss_tuck} and ~\ref{fig:move}:

{\setlength{\leftmargini}{4mm}
\begin{itemize}
\setlength\itemsep{-0.5mm}
    \item \textbf{Knit} pulls a new loop of yarn through all current loops,
    \item \textbf{Tuck} stacks a new loop onto a needle,
    \item \textbf{Miss} skips a needle,
    \item \textbf{Transfer} moves a needle's content to the other bed,
    \item \textbf{Racking} changes the offset between the two beds.
\end{itemize}
}

Whole garments (e.g. socks, sweatshirts, hats) can be automatically manufactured by scheduling complex sequences of these basic operations~\cite{Underwood09, McCann16}.
Furthermore, this manufacturing process also enables complex surface texture and various types of patterns.
Our aim is to automatically generate machine instructions to reproduce any geometric pattern from a single close-up photograph (e.g. of your friend's garment collection).
To simplify the problem, we assume the input image only captures 2D patterning effects of flat fabric, and we disregard variations associated with the 3D shape of garments.

\section{Instruction Set}

\figInstructionSet

General knitting programs are sequences of operations which may not necessarily have a regular structure.
In order to make our inverse design process more tractable, we devise a set of $17$ pattern 
instructions (derived from a subset of the hundreds of instructions from~\cite{Shima11}).
These instructions include all basic knitting pattern operations and they are specified on a regular 2D grid that can be parsed and executed line-by-line.
We first detail these instructions and then explain how they are sequentially processed.

The first group of instructions are based on the first three operations, namely: \textbf{Knit}, \textbf{Tuck} and \textbf{Miss}.

Then, \emph{transfer} operations allow moving loops of yarn across beds.
This is important because knitting on the opposite side produces a very distinct stitch appearance known as reverse stitch or \textbf{Purl} -- our complement instruction of \emph{Knit}.

Furthermore, the combination of transfers with \emph{racking} allows moving loops within a bed.
We separate such higher-level operations into two groups:
\textbf{Move} instructions only consider combinations that do not cross other such instructions so that their relative scheduling does not matter, and
\textbf{Cross} instructions are done in pairs so that both sides are swapped, producing what is known as \emph{cable} patterns.
The scheduling of \emph{cross} instructions is naturally defined by the instructions themselves.
These combined operations do not create any new loop by themselves, and thus we assume they all apply a \emph{Knit} operation before executing the associated needle moves, so as to maintain spatial regularity.

Finally, transfers also allow different stacking orders when multiple loops are joined together. We model this with our final \textbf{Stack} instruction.
The corresponding symbols and color coding of the instructions are shown in Figure~\ref{fig:instruction_set}.

\figDataset

\subsection{Knitting Operation Scheduling}

Given a line of instructions, the sequence of operations is done over a full line using the following steps:\vspace{-2mm}

{\setlength{\leftmargini}{4mm}
\begin{enumerate}
\setlength\itemsep{0mm}
    \item The current stitches are transferred to the new instruction side without racking;
    \item The base operation (\emph{knit}, \emph{tuck} or \emph{miss}) is executed;
    \item The needles of all transfer-related instructions are transferred to the opposite bed without racking;
    \item Instructions that involve moving within a bed proceed to transfer back to the initial side using the appropriate racking and order;
    \item Stack instructions transfer back to the initial side without racking.
\end{enumerate}
}

\paragraph{Instruction Side}
The only instructions requiring an associated bed \emph{side} are those performing a \emph{knit} operation.
We thus encode the bed side in the instructions (\emph{knit}, \emph{purl}, \emph{moves}), except for those where the side can be inferred from the local context.
This inference applies to \emph{Cross} which use the same side as past instructions (for aesthetic reasons), and \emph{Stack} which uses the side of its associated \emph{Move} instruction.
Although this is a simplification of the design space, we did not have any pattern with a different behaviour.

\section{Dataset for Knitting Patterns}

Before developing a learning pipeline, we describe our dataset and its acquisition process.
The frequency of different instruction types is shown in Figure~\ref{fig:dataset}.

The main challenge is that, while machine knitting can produce a large amount of pattern data reasonably quickly, we still need to specify these patterns (and thus generate reasonable pattern instructions), and acquire calibrated images for supervised learning.

\figSupervisedData

\subsection{Pattern Instructions}

We extracted pattern instructions from the proprietary software KnitPaint~\cite{Shima11}. These patterns have various sizes and span a large variety of designs from cable patterns to pointelle stitches, lace, and regular reverse stitches.

Given this set of initial patterns (around a thousand), we normalized the patterns by computing crops of $20\times 20$ instructions with $50\%$ overlap, while using default front stitches for the background of smaller patterns. This provided us with 12,392 individual $20\times 20$ patterns (after pruning invalid patterns since random cropping can destroy the structure).

We then generated the corresponding images in two different ways:
(1) by knitting a subset of 1,044 patches, \ie, Real data, and (2) by rendering all of them using the basic pattern preview from KnitPaint, \ie, Simulated data. 
See Figure~\ref{fig:supervised_data} for sample images.

\subsection{Knitting Many Samples}

The main consideration for capturing knitted patterns is that their tension should be as regular as possible so that knitting units would align with corresponding pattern instructions.
We initially proceeded with knitting and capturing patterns individually but this proved to not be scalable.

We then chose to knit sets of $25$ patterns over a $5\times 5$ tile grid, each of which would be separated by both horizontal and vertical tubular knit structures.
The tubular structures are designed to allow sliding $1/8$ inch steel rods which we use to normalize the tension, as shown in Figure~\ref{fig:acquisition}.
Note that each knitted pattern effectively provides us with two full opposite patterns (the front side, and its back whose instructions can be directly mapped from the front ones). This doubles the size of our real knitted dataset to \emph{2,088 samples} after annotating and cropping the knitted samples.

\figAcquisition[t]

\section{Instruction Synthesis Model}

We present our deep neural network model that infers a 2D knitting instruction map from an image of patterns.
In this section, we provide the theoretical motivation of our framework, and then we describe the loss functions we used, as well as implementation details. 

\subsection{Learning from different domains}
When we have a limited number of real data, it is appealing to leverage simulated data because high quality annotations are automatically available. 
However, learning from synthetic data is problematic due to apparent domain gaps between synthetic and real data.
We study how we can further leverage simulated data.
We are motivated by the recent {work}, Simulated+Unsupervised (S+U) learning~\cite{shrivastava2017learning}, but in contrast to them, we develop our framework from the generalization error perspective.

Let $\calX$ be input space (image), and $\calY$ output space (instruction label), and $\calD$ a data distribution on $\calX$ paired with
a true labeling function $y_\calD{:}\,\calX {\rightarrow} \calY$.
As a typical learning problem, we seek a hypothesis classifier $h{:}\,\calX {\rightarrow} \calY$ that best fits the target function $y$ in terms of an expected loss: $\calL_\calD(h,h') {=} \E_{x\sim \calD}[l\left( h(x), h'(x) \right)]$ for classifiers $h,h'$, where $l{:}\,\calY{\times}\calY \,{\rightarrow}\, \Real_+$ denotes a loss function.
We denote its empirical loss as ${\hat \calL_{\hat\calD}}(h,h') = \tfrac{1}{|{\hat\calD}|} \sum_{i=1}^{|{\hat\calD}|} l(h(x_i), h'(x_i))$, where $\hat\calD {=} \{x\}$ is the sampled dataset.

In our problem, since we have two types of data available, a source domain $\calD_S$ and a target domain $\calD_T$ (which is real or simulated as specified later), our goal is to find $h$ by minimizing the combination of empirical source and target losses as $\alpha$-mixed loss, 
${\hat\calL}_\alpha(h, y) = \alpha {\hat\calL}_{S}(h, y) + (1{-}\alpha) {\hat\calL}_{T}(h, y),$ where $0 {\leq} \alpha {\leq} 1$, and for simplicity we shorten $\calL_{\calD_{\{S,T\}}} {=} \calL_{\{S,T\}}$ and 
we use the parallel notation $\calL_{\{S,T\}}$ and $\hat{\calL}_{\{S,T\}}$.
Our underlying goal is to achieve a minimal generalized target loss $\calL_{T}$.
To develop a generalizable framework, we present a bound over the target loss in terms of its empirical $\alpha$-mixed loss, which is a slight modification of Theorem 3 of \cite{ben2010theory}.

\begin{theorem}
\label{thm:generalization_bound}
Let $\calH$ be a hypothesis class, and
$\calS$ be a labeled sample of
size $m$ generated by drawing $\beta m$ samples from $\calD_S$ and $(1-\beta) m$ samples from $\calD_T$ and labeling
them according to the true label $y$. 
Suppose $\calL$ is symmetric and obeys the triangle inequality.
Let $\hat h \in \calH$ be the empirical minimizer of \hbox{${\hat h} = \argmin_{h} {\hat \calL}_\alpha(h,y)$} on $\calS$ for a fixed $\alpha \in [0,1]$, 
and \hbox{$h^*_T = \argmin_h \calL_T(h,y)$} the target error minimizer.
Then, for any $\delta \in (0,1)$, with
probability at least $1 - \delta$ (over the choice of the samples), we have
\begin{equation}
\label{eq:generalization_bound}
\tfrac{1}{2}|\calL_T({\hat h},y) - {\calL}_T(h^*_T,y)| \leq  \alpha \left( \mathrm{disc}_\calH(\calD_S,\calD_T) + \lambda \right) + \epsilon ,
\end{equation}
where $\epsilon(m, \alpha, \beta, \delta) = \sqrt{ \tfrac{1}{2m} \left( \tfrac{\alpha^2}{\beta} + \tfrac{(1-\alpha)^2}{1-\beta} \right) \log(\tfrac{2}{\delta}) }$, and \hbox{$\lambda {=} \min_{h\in\calH} \calL_{S}(h, y) {+} \calL_{T}(h,y)$}.
\end{theorem}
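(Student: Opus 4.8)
I would follow the three-step recipe behind Theorem~3 of \cite{ben2010theory}, adapting it to the discrepancy $\mathrm{disc}_\calH$ and to losses that are symmetric and obey the triangle inequality. Write $\calL_\alpha(h,y)=\alpha\calL_S(h,y)+(1-\alpha)\calL_T(h,y)$ for the population $\alpha$-mixed loss. The plan is to: (a) bound $|\calL_\alpha(h,y)-\calL_T(h,y)|$ uniformly over $h\in\calH$ by the structural term $\alpha(\mathrm{disc}_\calH(\calD_S,\calD_T)+\lambda)$; (b) bound $|\hat\calL_\alpha(h,y)-\calL_\alpha(h,y)|$ by the statistical term $\epsilon$ via concentration; and (c) splice these two estimates around the empirical optimality of $\hat h$ and the population optimality of $h^*_T$.

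For step (a), I would introduce the ideal joint hypothesis $\tilde h=\argmin_{h\in\calH}\big(\calL_S(h,y)+\calL_T(h,y)\big)$, whose objective value is $\lambda$. Inserting $\tilde h$ and using symmetry and the triangle inequality of $\calL$ gives $|\calL_S(h,y)-\calL_S(h,\tilde h)|\le\calL_S(\tilde h,y)$ and $|\calL_T(h,\tilde h)-\calL_T(h,y)|\le\calL_T(\tilde h,y)$, while $|\calL_S(h,\tilde h)-\calL_T(h,\tilde h)|\le\mathrm{disc}_\calH(\calD_S,\calD_T)$ is immediate from the definition of the discrepancy as a supremum of $|\calL_S(\cdot,\cdot)-\calL_T(\cdot,\cdot)|$ over pairs in $\calH$. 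Summing yields $|\calL_S(h,y)-\calL_T(h,y)|\le\mathrm{disc}_\calH(\calD_S,\calD_T)+\lambda$, and since $\calL_\alpha(h,y)-\calL_T(h,y)=\alpha(\calL_S(h,y)-\calL_T(h,y))$, this gives the desired uniform bound with the factor $\alpha$.

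For step (b), I would observe that $\hat\calL_\alpha(h,y)$ is a weighted average of $m$ independent terms in $[0,1]$ --- weight $\alpha/(\beta m)$ on each of the $\beta m$ source points and $(1-\alpha)/((1-\beta)m)$ on each of the $(1-\beta)m$ target points --- so the sum of squared weights is $\tfrac1m\big(\tfrac{\alpha^2}{\beta}+\tfrac{(1-\alpha)^2}{1-\beta}\big)$. Hoeffding's inequality then gives $\Pr[|\hat\calL_\alpha(h,y)-\calL_\alpha(h,y)|\ge\epsilon]\le 2\exp\big(-2m\epsilon^2/(\tfrac{\alpha^2}{\beta}+\tfrac{(1-\alpha)^2}{1-\beta})\big)$, and inverting this at confidence $\delta$ recovers precisely the stated $\epsilon(m,\alpha,\beta,\delta)$. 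Since $\hat h$ depends on the sample, I would upgrade this to a statement holding uniformly over $\calH$ (this is exactly where the original theorem pays a VC-complexity term; the clean statement here folds it into $\epsilon$), so that with probability at least $1-\delta$ we have $|\hat\calL_\alpha(h,y)-\calL_\alpha(h,y)|\le\epsilon$ simultaneously at $h=\hat h$ and $h=h^*_T$.

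Finally, step (c) is the chain: on the good event, abbreviating $\mathrm{disc}_\calH=\mathrm{disc}_\calH(\calD_S,\calD_T)$,
\begin{align*}
\calL_T(\hat h,y)
&\le \calL_\alpha(\hat h,y)+\alpha(\mathrm{disc}_\calH+\lambda)
 \le \hat\calL_\alpha(\hat h,y)+\epsilon+\alpha(\mathrm{disc}_\calH+\lambda)\\
&\le \hat\calL_\alpha(h^*_T,y)+\epsilon+\alpha(\mathrm{disc}_\calH+\lambda)
 \le \calL_\alpha(h^*_T,y)+2\epsilon+\alpha(\mathrm{disc}_\calH+\lambda)\\
&\le \calL_T(h^*_T,y)+2\epsilon+2\alpha(\mathrm{disc}_\calH+\lambda),
\end{align*}
using (a) on the outer lines, (b) on the second and fourth inequalities, and $\hat h=\argmin_h\hat\calL_\alpha(h,y)$ on the third. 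Because $h^*_T$ minimizes $\calL_T$, the reverse inequality $\calL_T(h^*_T,y)-\calL_T(\hat h,y)\le 0$ is free, so $|\calL_T(\hat h,y)-\calL_T(h^*_T,y)|\le 2\alpha(\mathrm{disc}_\calH+\lambda)+2\epsilon$, and dividing by $2$ gives~\eqref{eq:generalization_bound}. I expect the main obstacle to be the uniform-convergence upgrade in step (b): Hoeffding alone controls a fixed hypothesis, but $\hat h$ is data-dependent, so a fully rigorous argument needs a Rademacher- or VC-type bound whose extra complexity term the clean statement suppresses; everything else is bookkeeping with the triangle inequality, symmetry of $\calL$, and the supremum definition of $\mathrm{disc}_\calH$.
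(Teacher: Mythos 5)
Your proposal follows essentially the same route as the paper's own proof: the same structural lemma bounding $|\calL_\alpha(h,y)-\calL_T(h,y)|$ by $\alpha(\mathrm{disc}_\calH(\calD_S,\calD_T)+\lambda)$ via the ideal joint hypothesis and the triangle inequality, the same Hoeffding-type concentration with the weighted-average variance term $\tfrac{1}{m}(\tfrac{\alpha^2}{\beta}+\tfrac{(1-\alpha)^2}{1-\beta})$, and the identical chain of five inequalities spliced around the empirical optimality of $\hat h$ and the nonnegativity of $\calL_T(\hat h,y)-\calL_T(h^*_T,y)$. The uniform-convergence subtlety you flag in step (b) is likewise glossed over in the paper, which applies its fixed-hypothesis concentration lemma directly to the data-dependent $\hat h$, explicitly presenting the omission of the VC-dimension machinery of \cite{ben2010theory} as a deliberate simplification.
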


The proof can be found in the supplementary material. 
Compared to \cite{ben2010theory}, Theorem~\ref{thm:generalization_bound} is purposely extended to use a more general definition of discrepancy $\textrm{disc}_{\calH}(\cdot, \cdot)$ \cite{mansour2009domain} that measures the discrepancy of two distributions (the definition can be found in the supplementary material) and to be agnostic to the model type (simplification), so that we can clearly present our motivation of our model design.

Theorem~\ref{thm:generalization_bound} shows that mixing two sources of data is possible to achieve a better generalization in the target domain.
The bound is always at least as tight as either of $\alpha=0$ or $\alpha=1$ (The case that uses either source or target dataset alone).
Also, as the total number of the combined data sample $m$ is larger, a tighter bound can be obtained.
We consider 0-1 loss for $l$ in this section for simplicity, but not limited to.

A factor that the generalization gap (the right hand side in \Eref{thm:generalization_bound}) strongly depends on is the discrepancy $\mathrm{disc}_\calH(\calD_S,\calD_T)$. 
This suggests that we can achieve a tighter bound if we can reduce $\mathrm{disc}_\calH(\calD_S,\calD_T)$.
We re-parameterize the target distribution $\calD_T$ as $\calD_R$ so that $\calD_T {=} g\circ \calD_R$, where $g$ is a distribution mapping function. 
Then, we find the mapping $g^*$ that leads to the minimal discrepancy for the empirical distribution ${\hat \calD}_R$ as:\vspace{-1mm}
\begin{align}
    g^* =& \argmin\nolimits_g \mathrm{disc}_\calH({\hat\calD}_S,~g\circ {\hat\calD}_R) \nonumber\\
       =& \argmin\nolimits_g \max_{h,h'\in \calH} | \calL_{{\hat\calD}_S}(h,h') - \calL_{g\circ {\hat\calD}_R}(h,h') |,
\end{align}
which is a min-max problem.
Even though the problem is defined for an empirical distribution, it is intractable to search the entire solution space; 
thus, motivated by \cite{ganin2016domain}, we approximately minimize the discrepancy by generative adversarial networks (GAN)~\cite{goodfellow2014generative}.
Therefore, deriving from Theorem~\ref{thm:generalization_bound}, our empirical minimization is formulated by minimizing the convex combination of source and target domain losses as well as the discrepancy as:
\begin{equation}
{\hat h, \hat g} = \argmin_{h\in \calH, g \in \calG} {\hat \calL}_\alpha(h,y) + \tau\cdot\mathrm{disc}_\calH({\hat\calD}_S, g{\circ} {\hat\calD}_R).
\end{equation}

Along with leveraging GAN, our key idea for reducing the discrepancy between two data distributions, \ie, domain gap, is to transfer the real knitting images (target domain, ${\hat\calD}_R$) to synthetic looking data (source domain, ${\hat\calD}_S$) rather than the other way around, \ie, making ${\hat\calD}_S \approx \hat g \circ{\hat\calD}_R$. 
The previous methods have investigated generating realistic looking images to adapt the domain gap.
However, we observe that, when simulated data is mapped to real data, the mapping is a one-to-many mapping due to real-world effects, such as lighting variation, geometric deformation, background clutter, noise, \etc.
This introduces an unnecessary challenge to learn $g(\cdot)$; thus, we instead learn to neutralize the real-world perturbation by mapping from real data to synthetic looking data.
Beyond simplifying the learning of $g(\cdot)$, it also allows the mapping to be utilized at test time for processing of real-world images.

We implement $h$ and $g$ using convolutional neural networks (CNN), and formulate the problem as a local instruction classification\footnote{While our program synthesis can be regarded as a multi-class classification, for simplicity, we consider the simplest binary classification here. However, multi-class classification can be extended by a combination of binary classifications
 \cite{shalev2014understanding}.} 
 and represent the output as a 2D array of classification vectors $\Vec{s}_{(i,j)} \in [0;1]^K$ (\ie, softmax values over $k\in K$) for our $K = 17$ instructions at each spatial location $(i,j)$.
In the following, we describe the loss we use to train our model $h\circ g$ and details about our end-to-end training procedure.

\newcommand{\sumi}[1][h]{\sum_{i=1}^{#1}}
\newcommand{\sumj}[1][w]{\sum_{j=1}^{#1}}
\newcommand{\sumk}{\sum_{k=1}^{K}}
\newcommand{\sumd}{\sum_{d=1}^8}
\newcommand{\ifactor}[1]{\dfrac{1}{#1}}

\paragraph{Loss function}
We use the cross entropy for the loss $\calL$.
We supervise the inferred instruction to match the ground-truth instruction using the standard multi-class cross-entropy
\hbox{$\textrm{CE}(\Vec{s}, \Vec{y}) = - \sum_k y_k\log\left( {s_k}\right)$}
where $s_k$ is the predicted likelihood (softmax value) for instruction $k$, which we compute at each spatial location $(i,j)$.

For synthetic data, we have precise localization of the predicted instructions.
In the case of the real knitted data, human annotations are imperfect and this can cause a minor spatial misalignment of the image with respect to the original instructions. For this reason, we allow the predicted instruction map to be globally shifted by up to one instruction. 
In practice, motivated by multiple instance learning~\cite{dietterich1997solving}, we consider the minimum of the per-image cross-entropy over all possibles one-pixel shifts (as well as the default no-shift variant), \ie, our complete cross entropy loss is\vspace{-1mm}
\begin{equation}
\label{eq:mil_xentropy}
    \mathcal{L}_\textrm{CE} = \frac{1}{Z_{CE}} \min_{d} \sum\nolimits_{i,j \in \mathcal{N}_s} \textrm{CE}(\Vec{s}_{(i,j) + d}, \Vec{y}_{(i,j)}),
\end{equation}
where $d\in \{(dx,dy)\,|\,dx,dy\in\{-1,0,+1\}\}$ is the pattern displacement for the real data and $d \in \{(0,0)\}$ for the synthetic data.
The loss is accumulated over the spatial domain $\mathcal{N}_s = \{2,\dots,w{-}1\} {\times} \{2,\dots,h{-}1\}$ for the instruction map size $w \times h$ reduced by boundary pixels. $Z_{CE} = |\mathcal{N}_s|$ is a normalization factor.

\subsection{Implementation details}
{Our base architecture is illustrated in Figure~\ref{fig:teaser}.}
We implemented it using TensorFlow~\cite{abadi2016tensorflow}.
The prediction network \texttt{Img2prog} takes $160{\times} 160$ grayscale images as input and generates $20{\times}20$ instruction maps.
The structure consists of an initial set of 3 convolution layers with stride 2 that downsample the image to $20{\times}20$ spatial resolution, a feature transformation part made of $6$ residual blocks~\cite{he2016deep, Zhu17}, and two final convolutions producing the instructions. 
The kernel size of all convolution layers is $3{\times}3$, except for the last layer which is $1{\times}1$.
We use instance normalization~\cite{Ulyanov16} for each of the initial down-convolutions, and ReLU everywhere.

We solve the minimax problem of the discrepancy $\textrm{disc}(\cdot, \cdot)$ \wrt $g$ using the least-square Patch-GAN \cite{Zhu17}.
Additionally, we add the perceptual loss and style loss~\cite{johnson2016perceptual} between input real images and its generated images and between simulated images and generated images, respectively, to regularize the GAN training, which stably speeds up the training of $g$.

The structure of the \texttt{Refiner} network $g$ and the balance between losses can be found in the supplementary.

\paragraph{Training procedure}
We train our network with a combination of the real knitted patterns and the rendered images. 
We have oversampled the real data to achieve 1:1 mix ratio with several data augmentation strategies, which can be found in the supplementary material.
We train with 80\% of the real data, withholding 5\% for validation and 15\% for testing, whereas we use all the synthetic data for training.

According to the typical training method for GAN~\cite{goodfellow2014generative}, we alternate the training between discriminator and the other networks, $h$ and $g$, but we update the discriminator only every other iteration, and the iteration is counted according to the number of updates for $h$ and $g$.

We trained our model for $150k$ iterations with batch size $2$ for each domain data using {ADAM} optimizer with initial learning rate $0.0005$, exponential decay rate $0.3$ every $50,000$ iterations. The training took from $3$ to ${4}$ hours (depending on the model) on a Titan Xp GPU.

 \section{Experiments}

We first evaluate baseline models for our new task, along with an ablation study looking at the impact of our loss and the trade-off between real and synthetic data mixing.
Finally, we look at the impact of the size of our dataset..

\paragraph{Accuracy Metric}
For the same reason our loss in \Eref{eq:mil_xentropy} takes into consideration a 1-pixel ambiguity along the spatial domain, we use a similarly defined accuracy. 
It is measured by the average of 
$\max_{d} \tfrac{1}{N_\textrm{inst}}\sum_{i,j} \mathbb{I}[{y_\textrm{GT}}_{(i,j)}\,{=}\,\argmax_k {s}_{(i,j) + d}^k]$
over the whole dataset,
where $N_\textrm{inst} = Z_\textrm{CE}$ is the same normalization constant as in \Eref{eq:mil_xentropy}, $y_\textrm{GT}$ the ground-truth label,
$\mathbb{I}[\cdot]$ is the indicator function that returns 1 if the statement is true, 0 otherwise.
We report two variants: \texttt{FULL} averages over all the instructions, whereas \texttt{FG} considers all instructions but the background (\ie, discard the most predominant instruction in the pattern).

\paragraph{Perceptual Metrics}
For the baselines and the ablation experiments, we additionally provide perceptual metrics that measure how similar the knitted pattern would look.
An indirect method for evaluation is to apply a pre-trained neural network to generated images and calculate statistics of its
output, \eg, Inception Score~\cite{salimans2016improved}.
Inspired by this, we learn a separate network to render simulated images of
the generated instructions and compare it to the rendering of the ground truth using standard PSNR and SSIM metrics.
Similarly to the accuracy, 
we allow for one instruction shift, which translates to full 8 pixels shifts in the image domain.

\subsection{Comparison to Baselines}

\Tref{tab:comparisons} compares the measured accuracy of predicted instructions on our real image test set.
We also provide qualitative results in \refFig{baselines}.

\tabComparisons

The first 5 rows of \Tref{tab:comparisons}-(a1-5) present results of previous works to provide snippets of other domain methods.
For CycleGAN, no direct supervision is provided and the domains are mapped in a fully unsupervised manner.
Together with Pix2pix, the two first methods do not use cross-entropy but L1 losses with GAN.
Although they can provide interesting image translations, they are not specialized for multi-class classification problems, and thus cannot compete.
All baselines are trained from scratch.
Furthermore, since their architectures use the same spatial resolution for both input and output, we up-sampled instruction maps to the same image dimensions using nearest neighbor interpolation.

S+U Learning \cite{shrivastava2017learning} used a refinement network to generate a training dataset that makes existing synthetic data look realistic.
In this case, our implementation uses our base network \texttt{Img2prog} and approximates real domain transfer by using style transfer.
We tried two variants: using the original Neural Style Transfer~\cite{Gatys16} and CycleGAN~\cite{Zhu17}.
Both input data types lead to very similar accuracy (negligible difference) when added as a source of real data.
We thus only report the numbers from the first one \cite{Gatys16}.

\tabPerInstruction

\subsection{Impact of Loss and Data Mixing Ratio}

The second group in \Tref{tab:comparisons}-(b1-2) considers our base network $h$ (\texttt{Img2prog}) without the refinement network $g$ (\texttt{Refiner}) that translates real images onto the synthetic domain.
In this case, \texttt{Img2prog} maps real images directly onto the instruction domain.
The results generated by all direct image translation networks trained with cross-entropy (a3-5) compare similarly with our base \texttt{Img2prog} on both accuracy and perceptual metrics.
This shows our base network allows a fair comparison with the competing methods, and as will be shown, our final performance (c1-5, d1) is not gained
from the design of \texttt{Img2prog} but \texttt{Refiner}.

The third group in \Tref{tab:comparisons}-(c1-5) looks at the impact of the mixing ratio $\alpha$ when using our full architecture.
In this case, the refinement network $g$ translates our real image into a synthetic looking one, which is then translated by \texttt{Img2prog} into instructions.
This combination favorably improves both the accuracy and perceptual quality of the results with the best mixing ratio of $\alpha {=} 2/3$ as well as a stable performance regime of $\alpha\in[0.5,0.9]$, which favors more the supervision from diverse simulated data.
While $\epsilon$ in Theorem~\ref{thm:generalization_bound} has a minimum at $\alpha{=}\beta$, we have a biased $\alpha$ due to other effects, $\textrm{disc}(\cdot)$ and $\lambda$.

We tried learning the opposite mapping $g$ (from synthetic image to realistic looking), while directly feeding real data to $h$.
This leads to detrimental results with mode collapsing.
The learned $g$ maps to a trivial pattern and texture that injects the pattern information in invisible noise -- \ie, adversarial perturbation -- to enforce that $h$  maintains a plausible inference.
We postulate this might be due to the non-trivial one-to-many mapping relationship from simulated data to real data, and overburden for $h$ to learn to compensate real perturbations by itself.

In the last row of \Tref{tab:comparisons}-(d1), we present the result obtained with a variant network, \texttt{Img2prog++} which additionally uses skip connections from each down-convolution of \texttt{Img2prog} to increase its representation power.
This is our best model in the qualitative comparisons of \refFig{baselines}. 

Finally, we check the per-instruction behavior of our best model, shown through the per-instruction accuracy in Table~\ref{tab:per_instruction}.
Although there is a large difference in instruction frequency, our method still manages to learn some useful representation for rare instructions but the variability is high.
This suggests the need for a systematic way of tackling the class imbalance~\cite{Huang16, Lin18}.

\subsection{Impact of Dataset Size}

\figDatasetAccuracy
In Figure~\ref{fig:dataset_accuracy}, we show the impact of the real data amount on accuracy.
As expected, increasing the amount of training data helps (and we have yet to reach saturation).
With low amounts of data (here $400$ samples or less), the training is not always stable -- some data splits lead to overfitting.

\figBaselines

\section{Discussion and Related Work}

\noindent\textbf{Knitting instruction generation}\quad
We introduce automatic program synthesis for machine kitting using deep images translation.
Recent works allow automatic conversion of 3D meshes to machine instructions~\cite{Narayanan18}, or directly model garment patterns on specialized meshes~\cite{Yuksel2012, Wu18a}, which can then be translated into hand knitting instruction~\cite{Wu18b}.
While this does enable a wide range of achievable patterns, the accompanying interface requires stitch-level specification.
This can be tedious, requires prior knitting experience and the resulting knits are not machine-knittable. 
We bypass the complete need of modeling these patterns and allow direct synthesis from image exemplars that are simpler to acquire and also machine knittable.

\noindent\textbf{Simulated data based learning}\quad
We demonstrate a way to effectively leverage both simulated and real knitting data.
There have been a recent surge of adversarial learning based domain adaptation methods~\cite{shrivastava2017learning,tzeng2017adversarial,hoffman2018cycada} in the simulation-based learning paradigm.
They deploy GANs and refiners to refine the synthetic or simulated data to look real.
We instead take the opposite direction to exploit the simple and regular domain properties of synthetic data.
Also, while they require multi-step training, our networks are end-to-end trained from scratch and only need a one-sided mapping rather than a two-sided cyclic mapping~\cite{hoffman2018cycada}.

\noindent\textbf{Semantic segmentation}\quad
Our problem is to transform photographs of knit structures into their corresponding instruction maps.
This resembles semantic segmentation which is a per-pixel multi-class classification problem except that the spatial extent of individual instruction interactions is much larger when looked at from the original image domain.
From a program synthesis perspective, we have access to a set of constraints on valid instruction interactions (e.g. \emph{Stack} is always paired with a \emph{Move} instruction reaching it).
This conditional dependency is referred to as context in semantic segmentation, and there have been many efforts to explicitly tackle this by Conditional Random Field (CRF) \cite{zheng2015conditional,chen2018deeplab,rother2004grabcut}.
They clean up spurious predictions of a weak classifier by favoring same-label assignments to neighboring pixels, \eg, Potts model.
For our problem, we tried a first-order syntax compatibility loss, but there was no noticeable improvement.
However we note that \cite{YuKoltun2016} observed that a CNN with a large receptive field but without CRF can outperform or compare similarly to its counterpart with CRF for subsequent structured guidance~\cite{zheng2015conditional,chen2018deeplab}.
While we did not consider any CRF post processing in this work, sophisticated modeling of the knittability would be worth exploring as a future direction.

Another apparent difference between knitting and semantic segmentation is that semantic segmentation is an easy -- although tedious -- task for humans, whereas parsing knitting instructions requires vast expertise or reverse engineering.

Finally, we tried to use a state-of-the-art scene parsing network with very large capacity and pretraining~\cite{Zhou16} which led to similar results to our best performing setup, but with a significantly more complicated model and training time.
See the supplementary for details.

\noindent\textbf{Neural program synthesis}\quad
In terms of returning explicit interpretable programs, our work is closely related to program synthesis, which is a \ignore{traditional} challenging, ongoing problem.\footnote{A similar concept is \emph{program induction}, in which the model learns to mimic the program rather than explicitly return it. From our perspective, semantic segmentation is closer to program induction, while our task is program synthesis.}
The recent advance of deep learning has made notable progress in this domain, \eg, \cite{johnson2017inferring, devlin2017robustfill}.
Our task would have potentials to extend the research boundary of this field, since it differs from any other prior task on program synthesis in that:
1) while program synthesis solutions adopt a sequence generation paradigm~\cite{kant2018recent},
our type of input-output pairs are 2D program maps, and
2) the domain specific language \ignore{(our instruction set)} is newly developed and
applicable to practical knitting.

\noindent\textbf{Limitations}\quad
This work has two main limitations:
(1) it does not explicitly model the pattern scale; and
(2) it does not impose hard constraints on the output semantics, thus the intent of some instructions may be violated.
We provide preliminary scale identification results in the supplementary, together with the details on the necessary post-processing that enables machine knitting of any output.
\section{Conclusion}

We have proposed an inverse process for translating high level specifications to manufacturing instructions based on deep learning.
In particular, we have developed a framework that translates images of knitted patterns to instructions for industrial whole-garment knitting machines.
In order to realize this framework, we have collected a dataset of machine instructions and corresponding images of knitted patterns.
We have shown both theoretically and empirically how we can improve the quality of our translation process by combining synthetic and real image data.
We have shown an uncommon usage of synthetic data to develop a model that maps real images onto a more regular domain from which machine instructions can more easily be inferred.

The different trends between our perceptual and semantic metrics bring the question of whether adding a perceptual loss on the instructions might also help improve the semantic accuracy.
This could be done with a differentiable rendering system.
Another interesting question is whether using higher-accuracy simulations~\cite{Yuksel2012, Wu18a} could help and how the difference in regularity affects the generalization capabilities of our prediction.

We believe that our work will stimulate research to develop machine learning methods for design and manufacturing.


\section*{Acknowledgements}

We would like to thank Jim McCann and his colleagues at the Carnegie Mellon Textiles Lab for providing us with the necessary tools to programmatically write patterns for our industrial knitting machine.

\bibliographystyle{icml2019}
\bibliography{paper.bib}

\cleardoublepage

\begin{center}
    \Large \bf -- Supplementary Material --\\
    Neural Inverse Knitting: From Images to Manufacturing Instructions
\end{center}

\section*{Contents}
\begin{itemize}
\setlength{\itemsep}{0mm}
    \item[*] Details of the \texttt{Refiner} network.
    \item[*] Loss balancing parameters.
    \item[*] Used data augmentation details.
    \item[*] Pattern scale identification.
    \item[*] Post-processing details.
    \item[*] Additional quantitative results.
    \item[*] Additional qualitative results.
    \item[*] Lemmas and theorem with the proofs.
\end{itemize}

\section*{Additional Resources}
Additional knitting-related resources (the dataset, code and overview videos of the machine knitting process) can be found on our project page:\\ \url{http://deepknitting.csail.mit.edu/}

\section*{The \texttt{Refiner} Network}
Our refinement network translates real images into regular images that look similar to synthetic images. Its implementation is similar to \texttt{Img2prog}, except that it outputs the same resolution image as input, of which illustration is shown in Figure~\ref{fig:network_refiner}.

\section*{Loss Balancing Parameters}
When learning our full architecture with both \texttt{Refiner} and \texttt{Img2prog}, we have three different losses: the cross-entropy loss $\mathcal{L}_{CE}$, the perceptual loss $\mathcal{L}_\textrm{Perc}$, and the PatchGAN loss.

Our combined loss is the weighted sum
\begin{equation}
    \mathcal{L} = \lambda_\textrm{CE} \mathcal{L}_\textrm{CE}
                + \lambda_\textrm{Perc} \mathcal{L}_\textrm{Perc}
                + \lambda_\textrm{GAN} \mathcal{L}_\textrm{GAN}
\end{equation}
where we used the weights:
$\lambda_\textrm{CE} = 3$, $\lambda_\textrm{Perc} = 0.02/(128)^2$ and $\lambda_\textrm{GAN} = 0.2$.
The losses $\mathcal{L}_\textrm{Perc}$ and $\lambda_\textrm{GAN}$ are measured on the output of \texttt{Refiner}, while the loss $\lambda_\textrm{CE}$ is measured on \texttt{Img2prog}.

The perceptual loss~\citep{johnson2016perceptual} consists of the feature matching loss and style loss (using the gram matrix).
If not mentioned here, we follow the implementation details 
of \cite{johnson2016perceptual}, where VGG-16~\cite{Simonyan14} is used for feature extraction, after replacing max-pooling operations with average-pooling.
The feature matching part is done using the \texttt{pool3} layer, comparing the input real image and the output of \texttt{Refiner} so as to preserve the content of the input data.
For the style matching part, we use the gram matrices of the \{\texttt{conv1\_2}, \texttt{conv2\_2}, \texttt{conv3\_3}\} layers with the respective relative weights \{0.3, 0.5, 1.0\}.
The measured style loss is between the synthetic image and the output of \texttt{Refiner}.

For $\calL_\textrm{GAN}$ and the loss for the discriminator, the least-square Patch-GAN loss~\cite{Zhu17} is used.
We used $\{-1,1\}$ for the regression labels for respective fake and real samples insted of the label $\{0,1\}$ used in \cite{Zhu17}.

For training, we normalize the loss $\lambda_\textrm{CE}$ to be balanced according to the data ratio of a batch.
Specifically, for example, suppose a batch consisting of 2 real and 4 synthetic samples, respectively.
Then, we inversely weighted the respective cross entropy losses for real and synthetic data by the weights of $\tfrac{4}{6}$ and $\tfrac{2}{6}$, so that the effects from the losses are balanced.
This encourages the best performance to be expected at near $\alpha=0.5$ within a batch.

\begin{figure}[!t]
    \centering
    \includegraphics[width=0.8\linewidth]{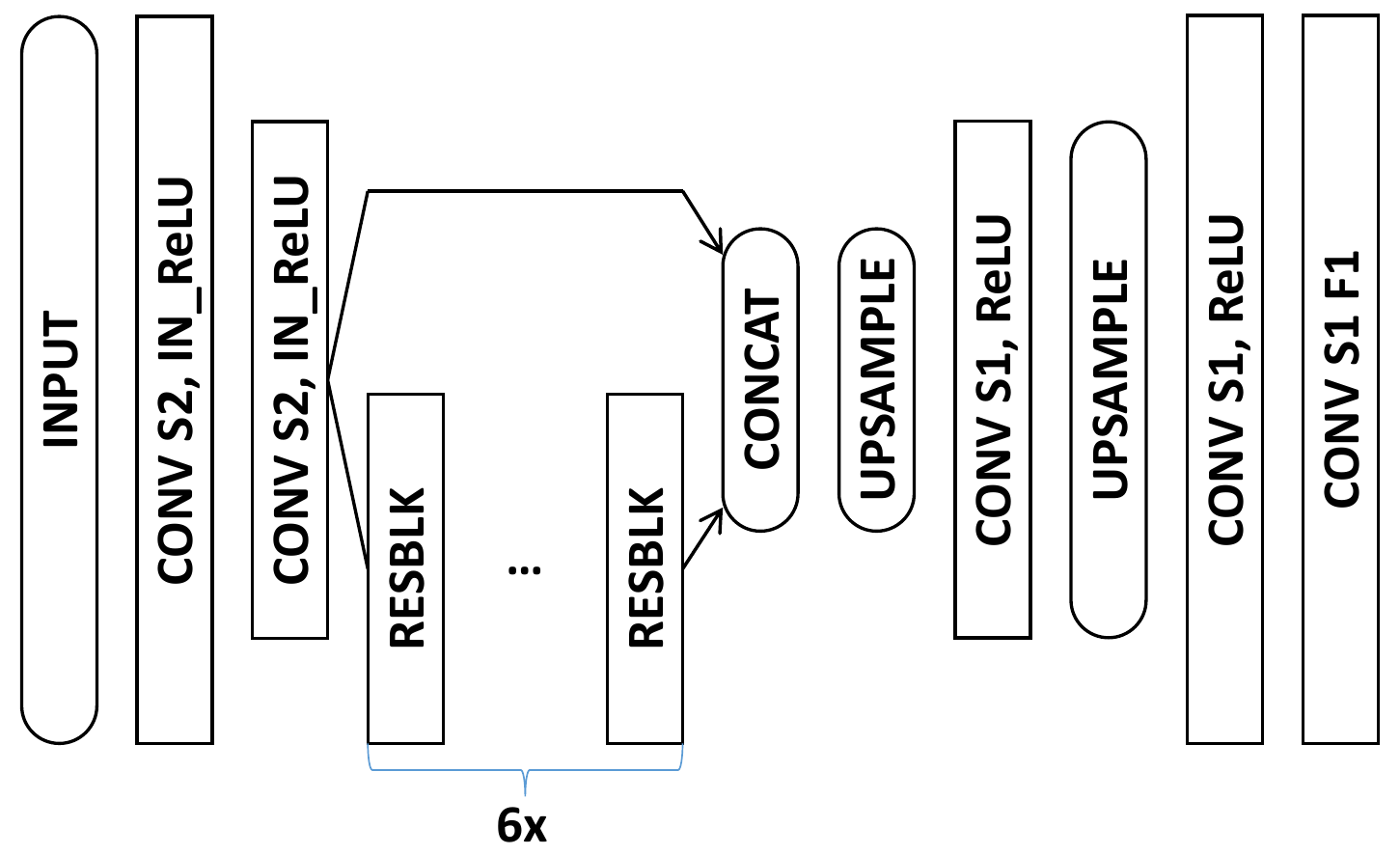}
    \caption{The illustration of the \texttt{Refiner} network architecture, where S$\#N$ denotes the stride size of $\#N$, \texttt{IN\_ReLU} indicates the Instance normalization followed by ReLU, \texttt{Resblk} is the residual block that consists of ConvS1-ReLU-ConvS1 with shortcut connection~\cite{he2016deep}, Upsample is the nearest neighbor upsampling with the factor $2\times$, $F$ is the output channel dimension.
    If not mentioned, the default parameters for all the convolutions are the stride size of 2, $F=64$, and the $3\times 3$ kernel size.
    }
    \label{fig:network_refiner}
\end{figure}

\figScaleEstimationExp

\section*{Data Augmentation}
We use multiple types of data augmentation to notably increase the diversity of yarn colors, lighting conditions, yarn tension, and scale:
\begin{itemize}
    \item \textbf{Global Crop Perturbation}: we add random noise to the location of the crop borders for the real data images, and crop on-the-fly during training; the noise intensity is chosen such that each border can shift at most by half of one stitch;
    \item \textbf{Local Warping}: we randomly warp the input images locally using non-linear warping with linear RBF kernels on a sparse grid.
    We use one kernel per instruction and the shift noise is a 0-centered gaussian with $\sigma$ being $1/5$ of the default instruction extent in image space (i.e. $\sigma = 8/5$);
    \item \textbf{Intensity augmentation}: we randomly pick a single color channel and use it as a mono-channel input, so that it provides diverse  spectral characteristics. Also note that, in order to enhance the intensity scale invariance, we apply instance normalization \cite{Ulyanov16} for the upfront convolution layers of our encoder network.
\end{itemize}

\section*{Pattern scale identification}
Our base system assumes that the input image is taken at a specific zoom level designed for our dataset, which is likely not going to be true for a random image.
We currently assume this to be solved by the user given proper visual feedback (i.e., the user would see the pattern in real-time as they scan their pattern of interest with a mobile phone).

Here, we investigate the potential of automatically discovering the scale of the pattern.
Our base idea is to evaluate the confidence of the output instruction map for different candidate scales and to choose the one with highest confidence.
Although the softmax output cannot directly be considered as a valid probability distribution, it can serve as an approximation, which can be calibrated for~\cite{guo2017calibration}.
As a proof of concept, we take a full $5\times 5$ pattern image from our dataset and crop its center at different scales from $160$ pixels to $2000$ pixels of width.
We then measure the output of the network and compute a \emph{scale pseudo-confidence} as the average over pixels of the maximum softmax component.
 
In Figure~\ref{fig:scale}, we show a sample image with crops at various scales, together with the corresponding uncalibrated pseudo-confidence measure, which peaks at around $600$ pixels scale.
Coincidentally, this corresponds to the scale of our ground truth crops for that image.

This suggests two potential scenarios:
(1) the user takes a much larger image and then that pattern image gets analysed offline to figure out the correct scale to work at using a similar procedure, and then generates a full output by using a tiling of crops at the detected scale, or
(2) an interactive system could provide scale information and suggest the user to get closer to (or farther from) the target depending on the confidence gradient.

\tableLargeNetwork

\section*{Data post-processing}

As mentioned in the main paper, our framework does not enforce hard constraint on the output semantics.
This implies that some outputs may not be machine-knittable as-is.

\newcommand{\cross}{\textsc{Cross}}
\newcommand{\move}{\textsc{Move}}
\newcommand{\stack}{\textsc{Stack}}
\newcommand{\knit}{\textsc{Knit}}

More precisely, the output of our network may contain invalid instructions pairs or a lack thereof.
We remedy to these conflicts by relaxing the conflicting instruction, which happens in only two cases:
{
\setdefaultleftmargin{4mm}{}{}{}{}{}
\begin{enumerate}
    \item Unpaired \cross~instructions -- we reduce such instructions into their corresponding \move~variants (since \cross~are \move{}s with relative scheduling), and
    \item \cross~pairs with conflicting schedules (e.g., both pair sides have same priority, or instructions within a pair's side having different priorities) -- in this case, we randomly pick a valid schedule (i.e., its impact is local).
\end{enumerate}
}
This is sufficient to allow knitting on the machine.
Note that \stack~are semantically \emph{supposed} to appear with a \move, but they don’t prevent knitting since their operations lead to the same as \knit~when unpaired, and thus do not require any specific post-processing.

\section*{Additional quantitative results}

The focus of the experiments in the main paper was on assessing specific trends such as the impact of the dataset size, or the different behaviours of baseline networks, the impact of mixing data types and the ratios of these.

As can be noted, we used a standard (residual) architecture and tried to avoid over-engineering our network or its parameters.
However, we provide here results that show that we can obviously still do better by using more complex and larger networks, to the detriment of having to train for a longer time and resulting in a much larger model size.

In our baseline, we compared with a sample architecture from~\cite{Zhou16}, which we made small enough to compare with our baseline \texttt{Img2prog} implementation. Furthermore, our baseline implementations were all trained from scratch and did not make use of pre-training on any other dataset.

Here, we provide results for a much larger variant of that network, which we name \emph{Large Scene Parsing}, and makes use of pre-training on ImageNet~\cite{russakovsky2015imagenet}.
The quantitative comparison is provided in Table~\ref{tab:large}, which shows that we can achieve even better accuracy than our best current results using our \texttt{Refiner}+\texttt{Img2prog}++ combination.
However, note that this comes with a much larger model size: ours has $1.4M$ parameters\footnote{M for Million}, whereas \emph{Large Scene Parsing} has $51.4M$.
Furthermore, this requires pre-training on ImageNet with millions of images (compared to our model working with a few thousands only).

\section*{Additional qualitative results}
We present additional qualitative results obtained from several networks in Figure~\ref{fig:baselines_large}.

\figBaselinesL

\section*{Proof of Theorem 1}
We first describe the necessary definitions and lemmas to prove Theorem 1. 
We need a general way to measure the discrepancy between two distributions, which we borrow from the definition of discrepancy suggested by~\cite{mansour2009domain}.\vspace{1mm}

\begin{definition}[Discrepancy \cite{mansour2009domain}]
Let $\calH$ be a class of functions mapping from $\calX$ to $\calY$. The discrepancy between two distribution $\calD_1$ and $\calD_2$ over $\calX$ is defined as 
\begin{equation}
\mathrm{disc}_\calH(\calD_1, \calD_2) = \max\limits_{h,h' {\in} \calH} \left| \calL_{\calD_1}(h, h') - \calL_{\calD_2}(h, h')\right|.    
\end{equation}
\end{definition}
The discrepancy is symmetric and satisfies the triangle inequality, regardless of any loss function.
This can be used to compare distributions for general tasks even including regression.

The following lemma is the extension of Lemma 4 in \cite{ben2010theory} to be generalized by the above discrepancy.\vspace{1mm}

\begin{lemma}
	\label{thm:mixeddomain_gap}
	Let $h$ be a hypothesis in class $\calH$, and assume that $\calL$ is symmetric and obeys the triangle inequality. Then
	\begin{equation}
	|\calL_\alpha(h,y) - \calL_T(h,y)| \leq \alpha \left( \mathrm{disc}_\calH(\calD_S,\calD_T) + \lambda \right),
	\end{equation}
	where \hbox{$\lambda {=} \calL_{S}(h^*, y) {+} \calL_{T}(h^*,y)$}, and the ideal joint hypothesis $h^*$ is defined as \hbox{$h^*{=}\argmin_{h\in \calH} \calL_{S}(h, y) {+} \calL_{T}(h,y)$}.
\end{lemma}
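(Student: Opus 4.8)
The plan is to reduce the bound on $|\calL_\alpha(h,y) - \calL_T(h,y)|$ to a bound on $|\calL_S(h,y) - \calL_T(h,y)|$, and then to control the latter by inserting the ideal joint hypothesis $h^*$ and repeatedly applying the triangle inequality together with the definition of $\mathrm{disc}_\calH$. First I would expand the $\alpha$-mixed loss: since $\calL_\alpha(h,y) = \alpha\calL_S(h,y) + (1-\alpha)\calL_T(h,y)$, a one-line computation gives $\calL_\alpha(h,y) - \calL_T(h,y) = \alpha\bigl(\calL_S(h,y) - \calL_T(h,y)\bigr)$, hence $|\calL_\alpha(h,y) - \calL_T(h,y)| = \alpha\,|\calL_S(h,y) - \calL_T(h,y)|$. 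So it suffices to show $|\calL_S(h,y) - \calL_T(h,y)| \le \mathrm{disc}_\calH(\calD_S,\calD_T) + \lambda$.

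For this, I would bound one direction and then invoke the symmetry of both $\calL$ and $\mathrm{disc}_\calH$ to get the other. Using the triangle inequality for $\calL$ twice and the definition of discrepancy once:
\begin{align*}
\calL_S(h,y) &\le \calL_S(h,h^*) + \calL_S(h^*,y)\\
&\le \calL_T(h,h^*) + \bigl|\calL_S(h,h^*) - \calL_T(h,h^*)\bigr| + \calL_S(h^*,y)\\
&\le \calL_T(h,y) + \calL_T(h^*,y) + \mathrm{disc}_\calH(\calD_S,\calD_T) + \calL_S(h^*,y),
\end{align*}
which rearranges to $\calL_S(h,y) - \calL_T(h,y) \le \mathrm{disc}_\calH(\calD_S,\calD_T) + \lambda$, recalling $\lambda = \calL_S(h^*,y) + \calL_T(h^*,y)$. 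Swapping the roles of $S$ and $T$ in the chain above (legitimate because $\mathrm{disc}_\calH$ is symmetric and $\lambda$ is symmetric in $S,T$) yields $\calL_T(h,y) - \calL_S(h,y) \le \mathrm{disc}_\calH(\calD_S,\calD_T) + \lambda$. Combining the two inequalities gives the desired bound on the absolute value, and multiplying through by $\alpha$ completes the proof.

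I do not expect a genuine obstacle here: the argument is a direct adaptation of Lemma 4 of \cite{ben2010theory}, the only change being that the $\mathcal{H}\Delta\mathcal{H}$-divergence is replaced by the more general $\mathrm{disc}_\calH$ from \cite{mansour2009domain}. The one place requiring a little care is making sure every step uses only the two hypotheses assumed for $\calL$ — symmetry $\calL(h,h') = \calL(h',h)$ and the triangle inequality $\calL(h,h'') \le \calL(h,h') + \calL(h',h'')$ — and that the passage $|\calL_{\calD_1}(h,h') - \calL_{\calD_2}(h,h')| \le \mathrm{disc}_\calH(\calD_1,\calD_2)$ is applied with $h,h' \in \calH$ (which holds since $h^* \in \calH$ by definition). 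No additional assumptions on the loss, the spaces, or the distributions are needed.
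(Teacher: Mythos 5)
Your proof is correct and follows essentially the same route as the paper's: reduce to $\alpha\,|\calL_S(h,y)-\calL_T(h,y)|$, insert the ideal joint hypothesis $h^*$, and use the triangle inequality of $\calL$ together with the definition of $\mathrm{disc}_\calH$ (valid since $h,h^*\in\calH$) to absorb the remaining terms into $\lambda$ and the discrepancy. The only cosmetic difference is that you establish the two one-sided inequalities and invoke symmetry of $\mathrm{disc}_\calH$ and $\lambda$, whereas the paper bounds the absolute value directly by adding and subtracting $\calL_S(h^*,h)$ and $\calL_T(h^*,h)$.
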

\begin{proof}
The proof is based on the triangle inequality of $\calL$, and the last inequality follows the definition of the discrepancy.
\begin{align}
        &   |\calL_\alpha(h,y) - \calL_T(h,y)| \nonumber\\
=       &   \alpha |\calL_S(h,y) - \calL_T(h,y)| \nonumber\\
=       &   \alpha \left|\calL_S(h,y) - \calL_S(h^*,h) + \calL_S(h^*,h) \right. \nonumber\\
        &   -\calL_T(h^*,h) + \calL_T(h^*,h) - \calL_T(h,y) \left| \right. \nonumber\\
\leq    &   \alpha \big| \left|\calL_S(h,y) - \calL_S(h^*,h) \right| +  \nonumber\\
        &   \left|\calL_S(h^*,h) - \calL_T(h^*,h) \right| 
    + \left|\calL_T(h^*,h) - \calL_T(h,y) \right|  \big|  \nonumber\\
\leq    & \alpha \big| \calL_S(h^*,y) {+} |\calL_S(h^*,h) {-} \calL_T(h^*,h)|
    {+} \calL_T(h^*,y)   \big|  \nonumber\\
\leq    & \alpha \left( \mathrm{disc}_\calH(\calD_S, \calD_T) + \lambda \right).
\end{align}
We conclude the proof.
\end{proof}

Many types of losses satisfy the triangle inequality, \eg, the $0-1$ loss \cite{ben2010theory,crammer2008learning} and
$l_1$-norm obey the triangle inequality, and $l_p$-norm ($p>1$) obeys the pseudo triangle inequality \cite{galanti2017theory}.

Lemma~\ref{thm:mixeddomain_gap} bounds the difference between the target loss and $\alpha$-mixed loss.
In order to derive the relationship between a true expected loss and its empirical loss, we rely on the following lemma.\vspace{1mm}

\begin{lemma}[\cite{ben2010theory}]
    \label{thm:true_empricial_gap}
For a fixed hypothesis $h$, if a random labeled sample of size $m$ is generated by
drawing $\beta m$ points from $\calD_S$ and $(1 - \beta)m$ points from $\calD_T$, and labeling them according
to $y_S$ and $y_T$ respectively, then for any $\delta\in(0,1)$, with probability at least $1-\delta$ (over the choice of the samples),
\begin{equation}
\label{eq:true_empricial_gap}
    |{\hat\calL}_\alpha (h,y) - {\calL}_\alpha (h,y)| \leq \epsilon(m, \alpha, \beta, \delta),
\end{equation}
where $\epsilon(m, \alpha, \beta, \delta) = \sqrt{ \tfrac{1}{2m} \left( \tfrac{\alpha^2}{\beta} + \tfrac{(1-\alpha)^2}{1-\beta} \right) \log(\tfrac{2}{\delta}) }$.
\end{lemma}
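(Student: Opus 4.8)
The plan is to derive this as a single application of Hoeffding's inequality to the fixed hypothesis $h$, after exhibiting $\hat\calL_\alpha(h,y)$ as a weighted average of $m$ independent bounded terms whose expectation is exactly $\calL_\alpha(h,y)$. First I would unfold the definitions: since $\hat\calL_\alpha(h,y) = \alpha\,\hat\calL_S(h,y) + (1-\alpha)\,\hat\calL_T(h,y)$, with $\hat\calL_S$ the average of $l(h(x_i),y(x_i))$ over the $\beta m$ source points and $\hat\calL_T$ the average over the $(1-\beta)m$ target points, the mixed empirical loss is a weighted sum over all $m$ sample points. Because the samples are drawn independently and labeled by $y_S$, $y_T$ so that $\E[\hat\calL_S] = \calL_S$ and $\E[\hat\calL_T] = \calL_T$, linearity of expectation gives $\E[\hat\calL_\alpha(h,y)] = \calL_\alpha(h,y)$, so the quantity to control is a centered sum of independent terms.

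Next I would identify the per-term range. Treating $l$ as the $0$-$1$ loss (bounded in $[0,1]$), each of the $\beta m$ source terms enters $\hat\calL_\alpha$ with coefficient $\tfrac{\alpha}{\beta m}$ and hence ranges over an interval of length $\tfrac{\alpha}{\beta m}$, while each of the $(1-\beta)m$ target terms enters with coefficient $\tfrac{1-\alpha}{(1-\beta)m}$ and ranges over an interval of length $\tfrac{1-\alpha}{(1-\beta)m}$. The Hoeffding denominator, i.e. the sum of squared range lengths $b_i-a_i$, is then
\begin{equation}
\sum_i (b_i - a_i)^2 = \beta m \cdot \tfrac{\alpha^2}{\beta^2 m^2} + (1-\beta) m \cdot \tfrac{(1-\alpha)^2}{(1-\beta)^2 m^2} = \tfrac{1}{m}\left( \tfrac{\alpha^2}{\beta} + \tfrac{(1-\alpha)^2}{1-\beta} \right).
\end{equation}
Hoeffding's inequality then yields $\Pr\!\left[ |\hat\calL_\alpha(h,y) - \calL_\alpha(h,y)| \geq t \right] \leq 2\exp\!\left( -2 t^2 \big/ \big[\tfrac{1}{m}( \tfrac{\alpha^2}{\beta} + \tfrac{(1-\alpha)^2}{1-\beta} )\big] \right)$. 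Finally I would set the right-hand side equal to $\delta$ and solve for $t$, which gives exactly $t = \epsilon(m,\alpha,\beta,\delta)$ as stated; passing to the complementary event yields the claimed bound with probability at least $1-\delta$.

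The step I expect to require the most care is the bookkeeping of the coefficients and ranges — specifically, confirming that the $\beta m$ source weights and the $(1-\beta) m$ target weights combine to give precisely the two-term denominator, since a slip here (e.g. conflating $\tfrac{\alpha}{\beta m}$ with $\tfrac{\alpha}{m}$) would corrupt the final constant and the dependence on $\beta$. Everything else is routine: the only substantive assumptions are independence of the draws (immediate from i.i.d. sampling) and boundedness of $l$ in a unit-length interval, which holds for the $0$-$1$ loss used throughout this section. For a general loss bounded in $[0,B]$ the identical argument carries through with an extra factor of $B$ multiplying $\epsilon$.
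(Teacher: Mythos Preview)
Your proof is correct. Note, however, that the paper does not actually supply its own proof of this lemma: it is stated with a citation to Ben-David et al.\ (2010) and then used as a black box in the proof of Theorem~1. Your argument is precisely the standard Hoeffding derivation given in that reference (their Lemma~5): write $\hat\calL_\alpha$ as a sum of $m$ independent bounded terms with the two blocks of weights $\tfrac{\alpha}{\beta m}$ and $\tfrac{1-\alpha}{(1-\beta)m}$, compute the sum of squared ranges, and invert the Hoeffding tail. So there is nothing to contrast---your approach coincides with the intended one.
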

The detail function form of $\epsilon$ will be omitted for simplicity. 
We can fix $m$, $\alpha$, $\beta$, and $\delta$ when the learning task is specified, then we can treat $\epsilon(\cdot)$ as a constant.\vspace{1mm}

\clearpage

\setcounter{theorem}{0}

\begin{proof}
We use Lemmas~\ref{thm:mixeddomain_gap} and \ref{thm:true_empricial_gap} for the bound derivation with their associated assumptions.
\begin{align}
& \calL_T({\hat h},y) \nonumber\\
& \leq \calL_\alpha({\hat h},y) + \alpha \left( \mathrm{disc}_\calH(\calD_S,\calD_T) + \lambda \right),\\
& \hspace{0.6\linewidth} \textrm{(By Lemma~\ref{thm:mixeddomain_gap})} \nonumber\\
& \leq {\hat\calL}_\alpha({\hat h},y) + \alpha \left( \mathrm{disc}_\calH(\calD_S,\calD_T) + \lambda \right) + \epsilon, \\
& \hspace{0.6\linewidth} \textrm{(By Lemma~\ref{thm:true_empricial_gap})} \nonumber\\
& \leq {\hat\calL}_\alpha(h^*_T,y) + \alpha \left( \mathrm{disc}_\calH(\calD_S,\calD_T) + \lambda \right) + \epsilon, \\
& \hspace{0.6\linewidth} ({\hat h} = \argmin_{h\in\calH} {\hat \calL}_\alpha(h)) \nonumber\\
& \leq {\calL}_\alpha(h^*_T,y) + \alpha \left( \mathrm{disc}_\calH(\calD_S,\calD_T) + \lambda \right) + 2\epsilon, \\
& \hspace{0.6\linewidth} \textrm{(By Lemma~\ref{thm:true_empricial_gap})} \nonumber\\
& \leq {\calL}_T(h^*_T,y) + 2\alpha \left( \mathrm{disc}_\calH(\calD_S,\calD_T) + \lambda \right) + 2\epsilon, \\
& \hspace{0.6\linewidth} \textrm{(By Lemma~\ref{thm:mixeddomain_gap})} \nonumber
\end{align}
which concludes the proof. 
\end{proof}

Theorem 1 does not have unnecessary dependencies for our purpose, which are used in~\cite{ben2010theory} such as unsupervised data and the restriction of the model type to finite  VC-dimensions.

\end{document}
